\documentclass{article} 
\usepackage{iclr2026_conference,times}
\usepackage{graphicx}
\usepackage{marvosym}

\usepackage{mathtools,amsfonts,amssymb,bm}
\mathtoolsset{showonlyrefs}
\usepackage{graphicx}
\usepackage{booktabs}
\usepackage{multirow}
\usepackage{makecell}
\usepackage{adjustbox}
\usepackage{nicefrac}
\usepackage{xspace}
\usepackage{pifont}
\usepackage{amsthm}
\usepackage{algorithm}
\usepackage{algpseudocode}

\newtheorem{theorem}{Theorem}[section]

\newtheorem{proposition}[theorem]{Proposition}

\theoremstyle{definition}

\def\eqref#1{equation~\refeq{#1}}

\def\1{\bm{1}}

\DeclareMathAlphabet{\mathsfit}{\encodingdefault}{\sfdefault}{m}{sl}
\SetMathAlphabet{\mathsfit}{bold}{\encodingdefault}{\sfdefault}{bx}{n}

\usepackage{xcolor}
\usepackage{colortbl}
\usepackage{wrapfig}
\usepackage{booktabs}
\usepackage{graphicx}
\usepackage{amsmath}
\usepackage{bm}
\usepackage{amssymb}
\usepackage{tabularx}
\usepackage{graphicx}
\usepackage[hidelinks]{hyperref}
\usepackage{url}
\hypersetup{
  colorlinks=true,
  linkcolor={[rgb]{0.10,0.24,0.48}},
  citecolor={[rgb]{0.10,0.24,0.48}},
  urlcolor={[rgb]{0.10,0.24,0.48}},
  pdfborder={0 0 0}
}

\title{Evaluation Is All You Need for Multi-Modal Autonomous Driving}
\makeatletter
\let\iclrstandardauthor\author
\newcommand{\iclrmaybeauthorsup}[1]{%
  \if\relax\detokenize{#1}\relax
  \else
    \textsuperscript{#1}%
  \fi
}
\newcommand{\iclr@authorlist}{}
\newcommand{\iclr@affiliationlist}{}
\newcommand{\iclr@contributionlist}{}
\newcommand{\iclrAuthorBlockWidth}{0.96\textwidth}
\newcommand{\iclrTitleAuthorGap}{1.2ex}
\newcommand{\iclrAuthorAffiliationGap}{0.9ex}
\newcommand{\iclrAffiliationContributionGap}{0.7ex}
\newcommand{\iclrAuthorLineSpread}{1.12}
\newif\ificlr@firstauthor
\newif\ificlr@firstaffiliation
\newif\ificlr@firstcontribution
\iclr@firstauthortrue
\iclr@firstaffiliationtrue
\iclr@firstcontributiontrue
\renewcommand{\author}[2][]{%
  \ificlr@firstauthor
    \global\iclr@firstauthorfalse
  \else
    \g@addto@macro\iclr@authorlist{, }%
  \fi
  \g@addto@macro\iclr@authorlist{#2\iclrmaybeauthorsup{#1}}%
}
\newcommand{\affiliation}[2][]{%
  \ificlr@firstaffiliation
    \global\iclr@firstaffiliationfalse
  \else
    \g@addto@macro\iclr@affiliationlist{\\}%
  \fi
  \g@addto@macro\iclr@affiliationlist{\iclrmaybeauthorsup{#1}#2}%
}
\newcommand{\contribution}[2][]{%
  \ificlr@firstcontribution
    \global\iclr@firstcontributionfalse
  \else
    \g@addto@macro\iclr@contributionlist{\qquad}%
  \fi
  \g@addto@macro\iclr@contributionlist{\iclrmaybeauthorsup{#1}#2}%
}
\AtBeginDocument{%
  \iclrstandardauthor{%
    \parbox{\iclrAuthorBlockWidth}{\centering
      \mbox{}\\[\iclrTitleAuthorGap]
      {\linespread{\iclrAuthorLineSpread}\selectfont\normalsize \iclr@authorlist}%
      \\[\iclrAuthorAffiliationGap]
      {\normalfont\small \iclr@affiliationlist}%
      \ificlr@firstcontribution
      \else
        \\[\iclrAffiliationContributionGap]
        {\normalfont\small \iclr@contributionlist}%
      \fi
    }%
  }%
}
\makeatother

\author[1, \textdagger]{Zeyu He}
\author[1, \textdagger]{Shiqi Liu}
\author[1, \textdagger]{Ke Chen}
\author[1]{Yun Yan}
\author[1]{Jinzi Wu}
\author[1]{Dianqiao Lei}
\author[1]{\\Sirui Wang}
\author[1]{ShuRui Peng}
\author[2]{Tao Chen}
\author[2]{Zhuo Huang}
\author[2]{Yu Wu}
\author[2]{Yadong Shao}
\author[1]{\\ Zhichao Li}
\author[1]{Ke Sun}
\author[1,\Letter]{Yang Guan}
\author[1]{Keqiang Li}
\author[1,\Letter]{Shengbo Eben Li}

\affiliation[1]{School of Vehicle and Mobility \& College of AI, Tsinghua University}
\affiliation[2]{Dongfeng Motor Corporation Research \& Development Institute}
\contribution[\textdagger]{Equal contribution}
\contribution[\mbox{\textnormal{\Letter}}]{Corresponding author}

\iclrpreprintcopy 

\begin{document}
\maketitle
\begin{abstract}
Multi-modal planning is promising for autonomous driving by representing
multiple plausible behaviors in ambiguous and long-tail scenarios.
Existing methods mainly focus on improving trajectory multi-modality, enhancing
trajectory representations, or reshaping the candidate distribution.
Nevertheless, we identify a pronounced \emph{generation--evaluation asymmetry} in multi-modal planning: despite strong oracle performance, existing planners often fail to reliably select the best available candidate, leaving substantial planning potential unrealized.
To address this challenge, we propose \textbf{iDriveVLA}, a multi-modal planning framework that improves the candidate trajectory space while enabling more reliable and context-aware trajectory evaluation.
Specifically, iDriveVLA introduces a unified trajectory evaluator comprising a Safety-aware Scorer for quality and risk estimation, together with a VLM-guided Modulator for scene-adaptive criterion weighting.
We further develop an oracle-aligned progressive training strategy
consisting of candidate imitation pretraining, candidate space refinement, and semantic ranking alignment.
On the public NAVSIM v1 leaderboard, iDriveVLA achieves a new
state-of-the-art performance of \textbf{94.95 PDMS}, surpassing the human-expert reference.

\end{abstract}

\section{Introduction}
\label{sec:intro}

End-to-end autonomous driving~\citep {hu2023uniad,guan2023integrated} has rapidly advanced by jointly optimizing
perception, prediction, and planning. More recently, vision-language-action (VLA) models~\citep{jiang2025surveyvla,zhou2026opendrivevla} further introduce
rich semantic knowledge and reasoning capabilities into driving systems.
However, existing planners still struggle with challenging and long-tail
scenarios, such as partially occupied lanes, unusual obstacles, temporary road
structures, and ambiguous traffic interactions~\citep{yao2026drivesuprim}.

A key difficulty in these scenarios is that the appropriate driving behavior
is often inherently ambiguous: multiple motion strategies may be feasible,
yet they can differ substantially in safety, efficiency, comfort, and
interaction with surrounding agents~\citep{liao2025diffusiondrive}.
As illustrated in Figure\hyperref[fig:intro]{~\ref*{fig:intro}(a)}, this has motivated multi-modal
planning~\citep{guo2025ipad,li2025gtrs, guan2026enhanced}, where multiple plausible trajectories
are generated and subsequently evaluated for final execution.
Recent studies have substantially strengthened candidate generation through diffusion-based trajectory modeling~\citep{liao2025diffusiondrive}, more expressive trajectory representations~\citep{xing2026clear}, and training-distribution design for improved candidate coverage~\citep{ang2026clover}.

Despite increasingly strong candidate generation, we find that trajectory
evaluation has become a major bottleneck in multi-modal planning.
As shown in Figure\hyperref[fig:intro]{~\ref*{fig:intro}(b)}, increasing trajectory diversity improves
oracle performance by providing more potential solutions, while also
introducing more competing behaviors that the evaluator must distinguish.
This reveals a \emph{generation--evaluation asymmetry}: the ability to generate
diverse high-quality trajectories advances faster than the ability to reliably
evaluate and select among them.
This asymmetry is further quantified in Figure\hyperref[fig:intro]{~\ref*{fig:intro}(c)}, where the oracle candidate achieves
\textbf{99.21 PDMS}, whereas the learned evaluator reaches only
\textbf{93.67 PDMS} on NAVSIM v1~\citep{dauner2024navsim}.
The substantial gap indicates that considerable planning potential is already
contained in the candidate set but remains unrealized.
A key limitation is that existing evaluation mechanisms rely mainly on
predefined planning signals and lack sufficient safety awareness and
scene-adaptive semantic reasoning when comparing behaviorally distinct
trajectories.

To address this challenge, we propose \textbf{iDriveVLA}, a multi-modal
planning framework that improves both candidate quality and trajectory
selection.
iDriveVLA introduces a trajectory evaluator that combines explicit
safety awareness with VLM-guided semantic adaptation, enabling more reliable
selection among behaviorally distinct candidates.
We further develop an oracle-aligned progressive training strategy
that progressively improves candidate generation and selection reliability
through three stages: candidate imitation pretraining, candidate space refinement, and semantic ranking alignment.

Our main contributions are summarized as follows:
\begin{itemize}
    \item We identify and characterize a {generation--evaluation asymmetry} in multi-modal planning: strong candidate generators already provide high oracle potential, yet imperfect trajectory evaluation prevents this potential from being fully realized.

\item We propose {iDriveVLA}, featuring a trajectory evaluator that integrates a {Safety-aware Scorer} for low-quality candidate filtering, along with a {VLM-guided Modulator} for scene-adaptive semantic evaluation. We further develop a progressive training strategy to improve both candidate performance and ranking alignment.


\item On the public NAVSIM v1 benchmark~\citep{dauner2024navsim},
    iDriveVLA achieves {94.95 PDMS} and ranks \textbf{1st},
    surpassing the human-expert reference.
\end{itemize}

\begin{figure}[t]
    \centering
    \includegraphics[width=0.93\linewidth]{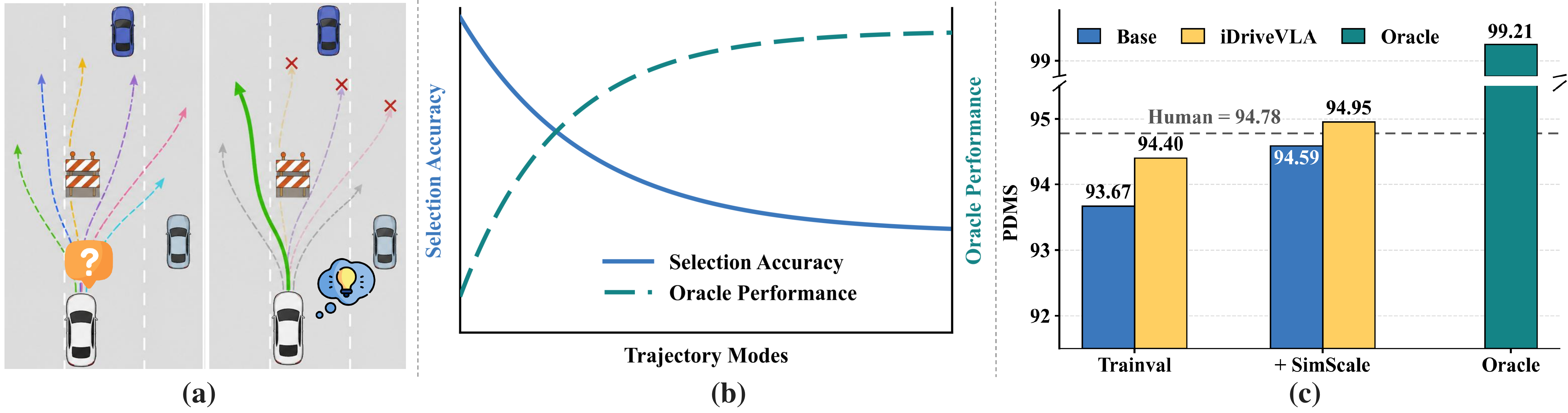}
    \caption{
    \textbf{Core idea.}
    (a) Multiple plausible trajectories may coexist in challenging scenes,
    making reliable trajectory selection critical.
    (b) Increasing trajectory diversity improves oracle performance but also
    introduces more competing behaviors for the evaluator, revealing a
    generation--evaluation asymmetry in multi-modal planning.
(c) This asymmetry leaves substantial oracle potential untapped in the
strong multi-modal planner DrivoR~\citep{kirby2026drivor}. In contrast, iDriveVLA
improves trajectory evaluation, substantially narrows the oracle gap, and
surpasses the human-expert reference on NAVSIM v1~\citep{dauner2024navsim}.
    }
    \label{fig:intro}
\end{figure}

\section{Related Work}
\label{sec:related_works}

\textbf{Vision-Language-Action Planning.}
Recent studies~\citep{jiang2025surveyvla,zhou2026opendrivevla} introduce vision-language models (VLMs) into autonomous driving to leverage their rich semantic priors and scene reasoning capabilities. A common line of work first equips VLMs with driving-specific knowledge through visual question answering or instruction tuning~\citep{chi2025impromptu}, and then extends them into VLA models that directly predict driving trajectories via action tokens~\citep{zhou2025autovla}, textual trajectory representations~\citep{hwang2025emma}, or hidden-state-based action decoders~\citep{li2026recogdrive}. Despite their stronger semantic reasoning capabilities, existing VLM-based planners often remain behind strong end-to-end driving models in trajectory-level planning performance~\citep{dauner2024navsim,jia2024bench2drive}. 

\textbf{Multi-modal Trajectory Generation.}
Recent end-to-end autonomous driving methods increasingly adopt multi-modal trajectory generation to capture diverse and uncertain driving behaviors.
Vocabulary-based approaches, such as VADv2~\citep{jiang2026vadv2}, discretize the continuous planning space into representative trajectory candidates, providing broad coverage of driving modes.
Generative methods, such as GenAD~\citep{zheng2024genad} and DiffusionDrive~\citep{liao2025diffusiondrive}, further model continuous trajectory distributions to generate diverse candidates with greater flexibility.
More recently, proposal-based methods such as iPad~\citep{guo2025ipad} and DrivoR~\citep{kirby2026drivor} employ learnable queries to iteratively generate and refine trajectory hypotheses, enlarging the optimization space beyond fixed vocabularies.
These advances have substantially improved trajectory diversity and quality; however, the resulting gains are increasingly constrained by the accuracy of downstream trajectory evaluation, limiting their full contribution to overall driving performance.

\textbf{Scorer-Based Autonomous Driving.}
Multi-modal planning has become a common paradigm in end-to-end autonomous driving, where diverse trajectory hypotheses are generated and subsequently ranked by a trajectory scorer for final execution~\citep{guo2025ipad}. Recent works improve this paradigm by learning more robust and generalizable scoring functions across different candidate distributions~\citep{li2025gtrs},
predicting interpretable planning metrics for candidate ranking~\citep{kirby2026drivor},
or learning structured scoring rules directly from driving demonstrations~\citep{xiong2025flora}.
However, their scoring criteria are primarily derived from geometric or predefined planning signals, with limited capability to adapt candidate preferences according to high-level scene semantics.

\section{Method}
\label{sec:method}

\subsection{Diagnosing the Trajectory Evaluation Bottleneck}
\label{sec:trajectory_evaluation_bottleneck}

Given multi-view visual observations $\mathcal{I}$ and the current ego state
$\bm{e}$, a multi-candidate trajectory generator produces $K$ future
trajectory hypotheses:
\begin{equation}
\mathcal{T}
=
\{\bm{\tau}_i\}_{i=1}^{K}
=
\pi_{\mathrm{gen}}(\mathcal{I},\bm{e}),
\qquad
\bm{\tau}_i\in\mathbb{R}^{T\times 3},
\label{eq:candidate_generation}
\end{equation}
where $T$ denotes the planning horizon and each trajectory consists of future
positions and headings.

\begin{wrapfigure}{r}{0.42\linewidth}
    \centering
    \vspace{-2mm}
    \includegraphics[width=\linewidth]{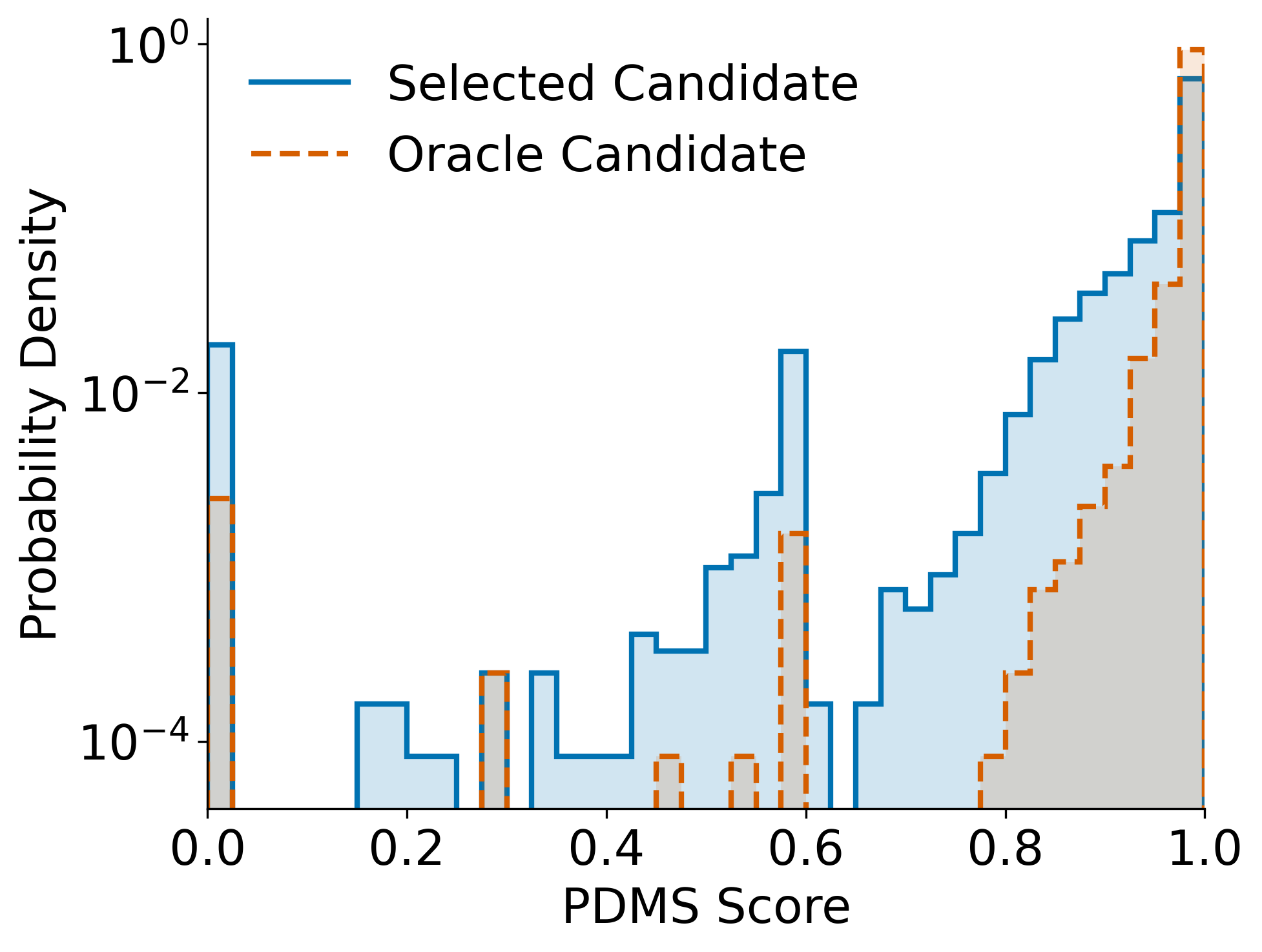}
    \caption{
    \textbf{PDMS distributions of the selected and oracle candidates from the base planner.}
    The oracle distribution is strongly concentrated near high PDMS,
    whereas the selected candidates exhibit a substantially heavier
    low-score tail.
    }
    \label{fig:candidate_selection_gap}
    \vspace{-3mm}
\end{wrapfigure}

Generating multiple candidates increases behavioral diversity and improves
the chance that a high-quality trajectory is included in
$\mathcal{T}$.
However, the final planning performance is determined not only by what the
generator can produce, but also by whether the trajectory evaluator can
identify a good candidate from the resulting set.

To diagnose this issue, we analyze the strong multi-modal driving model
DrivoR~\citep{kirby2026drivor} by comparing the downstream PDMS of its
selected trajectory with that of the {oracle candidate}, defined as the
highest-PDMS trajectory within the same candidate set.
As shown in Figure~\ref{fig:candidate_selection_gap}, oracle candidates are
strongly concentrated near high PDMS, whereas selected candidates exhibit a
broader distribution with a heavier low-score tail.
This reveals a clear \emph{generation--evaluation asymmetry}: high-quality
trajectories are often already available, but imperfect evaluation prevents
their potential from being fully realized.
As candidate diversity increases, this issue may become more pronounced
because the evaluator must distinguish among more competing behaviors.
A simple analysis of this candidate-competition effect is provided in
Appendix~\ref{app:candidate_competition_proof}.

These observations motivate us to shift attention from candidate generation
alone toward more reliable trajectory evaluation. Appendix~\ref{app:oracle_selection} further show that safe, high-quality
trajectories may already exist among geometrically similar candidates, yet
the evaluator can still select an unsafe alternative. This exposes its
insufficient sensitivity to fine-grained safety risks and interaction
semantics, directly motivating our safety-aware semantic trajectory evaluator.

\begin{figure}[t]
    \centering
\IfFileExists{figures/idrivevla_framework.pdf}{%
    \includegraphics[width=0.98\linewidth]{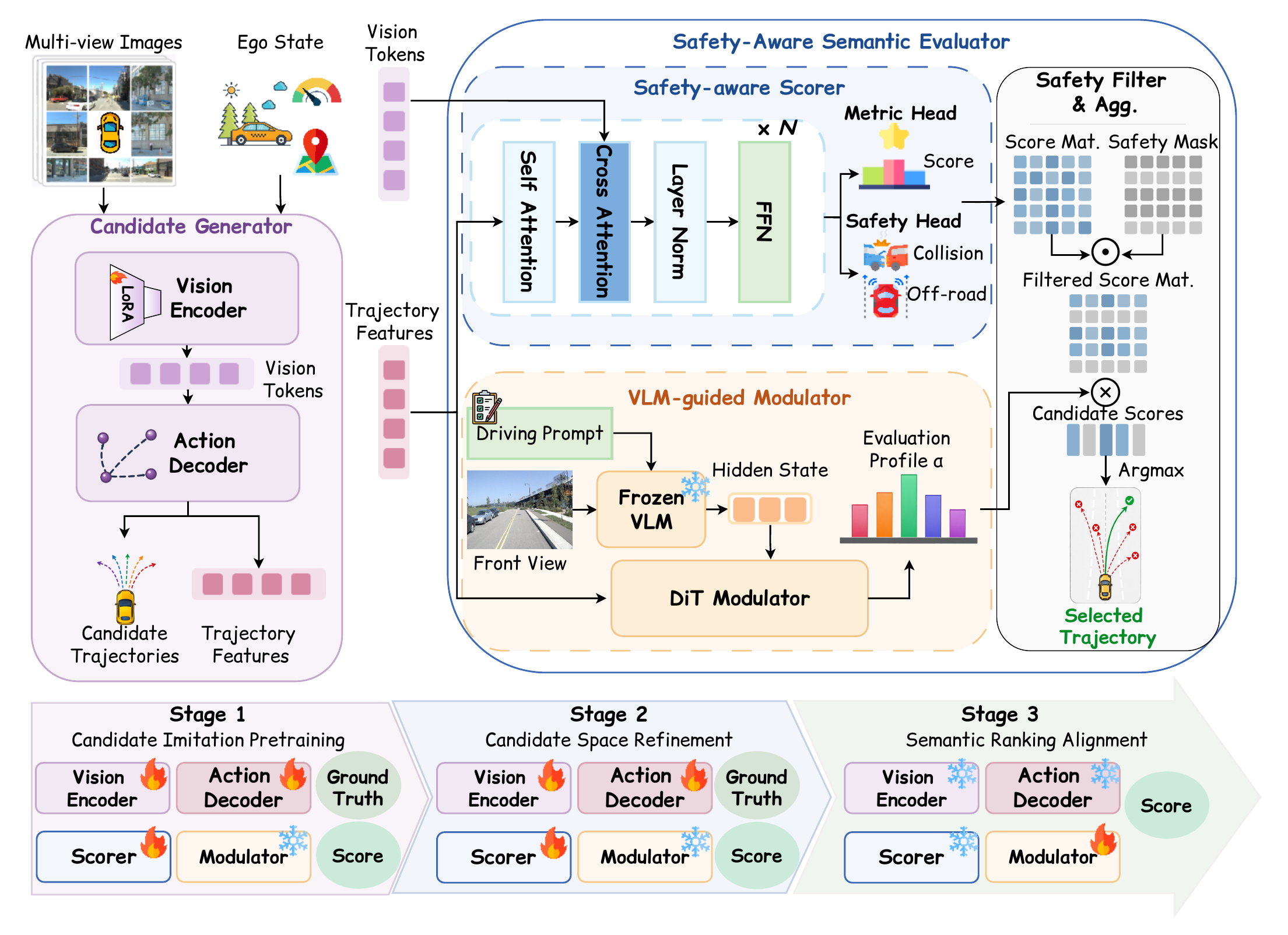}%
}{%
    \fbox{\rule{0pt}{4cm}\rule{0.95\linewidth}{0pt}}%
}
    \caption{
    \textbf{Overview of the iDriveVLA framework.}
    The candidate generator produces multiple trajectory hypotheses from
    multi-view observations and the ego state.
    Our evaluator consists of a safety-aware Scorer and a
    VLM-guided Modulator.
    The Scorer predicts criterion-wise planning scores and safety risks,
    while the Modulator generates scene-adaptive criterion weights.
    Unsafe candidates are filtered before score aggregation, and the
    highest-scoring trajectory is selected.
    The framework is progressively optimized through three training stages. Flame and snowflake icons denote trainable and frozen modules, respectively.
    }
    \label{fig:framework}
\end{figure}

\subsection{Safety-Aware Semantic Trajectory Evaluation}
\label{sec:trajectory_evaluator}

Motivated by the generation--evaluation asymmetry identified above, we improve
candidate evaluation with a Safety-aware Semantic Evaluator.
As shown in Figure~\ref{fig:framework}, it consists of a
\emph{Safety-aware Scorer} for candidate quality and safety estimation and a
\emph{VLM-guided Modulator} for scene-adaptive evaluation.
Their outputs are combined through safety filtering and score aggregation to
select the final trajectory.

\textbf{Safety-aware Scorer.}
For each candidate trajectory $\bm{\tau}_i$, the Scorer first constructs a
scene-conditioned trajectory representation:
\begin{equation}
\bm{c}_i
=
\pi_{\mathrm{score}}
\left(
\bm{\tau}_i,\bm{e},\mathcal{I}
\right).
\label{eq:candidate_representation}
\end{equation}

Following~\citep{dauner2024navsim}, we consider six structured planning
criteria,
\(
\mathcal{M}
=
\{\mathrm{NC},\mathrm{DAC},\mathrm{TTC},
\mathrm{EP},\mathrm{DDC},\mathrm{Comf.}\},
\)
corresponding to collision avoidance, drivable-area compliance,
time to collision, ego progress, driving-direction compliance, and comfort.
A metric head predicts the criterion-wise scores:
\begin{equation}
\bm{v}_i
=
\sigma\left(
\pi_{\mathrm{metric}}(\bm{c}_i)
\right),
\label{eq:metric_prediction}
\end{equation}
where $\bm{v}_i=\{v_i^m\}_{m\in\mathcal{M}}$ denotes the normalized planning
scores of candidate $\bm{\tau}_i$.

In parallel, a lightweight safety head predicts the collision and off-road
risks:
\begin{equation}
\left(
p_i^{\mathrm{col}},
p_i^{\mathrm{off}}
\right)
=
\sigma\left(
\pi_{\mathrm{safe}}(\bm{c}_i)
\right),
\label{eq:safety_prediction}
\end{equation}
which are used to suppress candidates with high predicted
risk before final ranking.

\textbf{VLM-guided Modulator.}
Fixed criterion weights cannot capture the context-dependent preference among
different driving behaviors.
We therefore introduce a semantic Modulator that adapts the evaluation
weights according to the current scene.
Given the front-view image and a predefined driving prompt, a frozen VLM
extracts a high-level semantic representation:
\begin{equation}
\bm{z}
=
\mathrm{VLM}
\left(
I_{\mathrm{front}},
\mathrm{prompt}
\right).
\label{eq:vlm_feature}
\end{equation}

Conditioned on $\bm{z}$ and the candidate features
$\{\bm{c}_i\}_{i=1}^{K}$, the DiT Modulator predicts a scene-adaptive
criterion-weight vector:
\begin{equation}
\bm{\alpha}
\sim
\pi_{\mathrm{dit}}
\left(
\cdot
\mid
\bm{z},
\{\bm{c}_i\}_{i=1}^{K}
\right),
\qquad
\bm{\alpha}
=
\{\alpha_m\}_{m\in\mathcal{M}}.
\label{eq:adaptive_weights}
\end{equation}
The Modulator changes only how different planning criteria are weighted and
does not modify the candidate trajectories themselves.

\textbf{Safety Filtering and Score Aggregation.}
For each candidate, we first construct a safety mask from the predicted
collision and off-road risks.
Unsafe candidates are removed from consideration, while the remaining
candidates are ranked by aggregating the criterion-wise scores predicted in
\eqref{eq:metric_prediction}:
\begin{equation}
M_i
=
\mathbb{I}
\left[
\max
\left(
p_i^{\mathrm{col}},
p_i^{\mathrm{off}}
\right)
\leq \delta
\right],
\qquad
S_i
=
M_i
\sum_{m\in\mathcal{M}}
\alpha_m v_i^m,
\label{eq:final_candidate_score}
\end{equation}
where $M_i\in\{0,1\}$ denotes the safety mask.
The final trajectory is then selected as
\(
\hat{\bm{\tau}}
=
\bm{\tau}_{\arg\max_i S_i}.
\label{eq:trajectory_selection}
\)

\subsection{Oracle-Aligned Progressive Training}
\label{sec:training}

To jointly improve candidate quality and selection reliability, we train
iDriveVLA in three progressive stages: candidate imitation pretraining,
candidate space refinement, and semantic ranking alignment.
These stages progressively establish diverse candidates, improve their
downstream planning quality, and align trajectory ranking with scene context.

\textbf{Stage 1: Candidate Imitation Pretraining.}
We first jointly train the trajectory generator and Scorer,
while disabling the VLM-guided Modulator. Given the expert trajectory
$\bm{\tau}^{\star}$, the generator is optimized with a best-of-$K$
imitation objective:
\begin{equation}
\mathcal{L}_{\mathrm{traj}}
=
\min_{i=1,\ldots,K}
\left\|
\bm{\tau}_{i}
-
\bm{\tau}^{\star}
\right\|_1 .
\label{eq:stage1_traj}
\end{equation}
This objective encourages the candidate set to contain trajectories close to
expert behaviors while preserving diverse motion hypotheses.

Meanwhile, the Scorer is supervised using candidate-level planning metrics
and safety labels. Let $g_i^m$ denote the ground-truth value of criterion
$m\in\mathcal{M}$, and define
$b_i^{\mathrm{col}}=\mathbb{I}[g_i^{\mathrm{NC}}=0]$ and
$b_i^{\mathrm{off}}=\mathbb{I}[g_i^{\mathrm{DAC}}=0]$.
The evaluator loss is:
\begin{equation}
\mathcal{L}_{\mathrm{eval}}
=
\frac{1}{K}
\sum_{i=1}^{K}
\left[
\sum_{m\in\mathcal{M}}
\operatorname{BCE}(v_i^m,g_i^m)
+
\operatorname{BCE}
(p_i^{\mathrm{col}},b_i^{\mathrm{col}})
+
\operatorname{BCE}
(p_i^{\mathrm{off}},b_i^{\mathrm{off}})
\right].
\label{eq:score_loss}
\end{equation}

Combining the trajectory imitation objective with the evaluator supervision
in \eqref{eq:score_loss}, the overall Stage-1 objective is:
\begin{equation}
\mathcal{L}_{\mathrm{S1}}
=
\mathcal{L}_{\mathrm{traj}}
+
\lambda_{\mathrm{eval}}
\mathcal{L}_{\mathrm{eval}},
\label{eq:stage1_loss}
\end{equation}
where $\lambda_{\mathrm{eval}}$ balances trajectory imitation and structured
evaluator supervision.

\textbf{Stage 2: Candidate Space Refinement.}
Although Stage 1 establishes a diverse candidate space, the generated
distribution may still contain trajectories that are suboptimal under the
downstream planning objective. We therefore further refine the candidate
generator according to planning rewards. Inspired by reward-guided trajectory
generation in diffusion-based planning~\citep{zheng2025diffusionplanner}, we
introduce a reward-weighted supervised refinement (RFT) objective to emphasize
training samples associated with higher downstream planning quality.

Specifically, let $R^\star\in[0,1]$ denote the PDMS reward of the expert trajectory
$\bm{\tau}^{\star}$. Building on the best-of-$K$ imitation objective in
\eqref{eq:stage1_traj}, we weight each training sample according to the
downstream quality of its expert trajectory:
\begin{equation}
w_{\mathrm{RFT}}
=
\exp(\beta R^\star),
\qquad
\mathcal{L}_{\mathrm{RFT}}
=
w_{\mathrm{RFT}}
\mathcal{L}_{\mathrm{traj}},
\label{eq:rft_objective}
\end{equation}
where $\beta$ controls the strength of reward-based refinement.
This exponential weighting can also be interpreted as reward-based
exponential tilting of the supervised training distribution under a
KL-regularized objective; we provide the corresponding derivation in
Appendix~\ref{app:rft_derivation}.

The complete Stage-2 objective is:
\begin{equation}
\mathcal{L}_{\mathrm{S2}}
=
\mathcal{L}_{\mathrm{RFT}}
+
\lambda_{\mathrm{eval}}
\mathcal{L}_{\mathrm{eval}} .
\label{eq:stage2_loss}
\end{equation}
This stage places greater optimization emphasis on expert trajectories with
higher downstream planning quality while retaining structured evaluator
supervision.




\textbf{Stage 3: Semantic Ranking Alignment.}
Even with improved candidate quality, reliable trajectory selection requires
understanding scene-dependent driving preferences. Therefore, we optimize the
VLM-guided Modulator as a semantic ranking policy, while freezing the
trajectory generator, Scorer, and VLM backbone.

Conditioned on the semantic representation $\bm{z}$ and candidate features
$\{\bm{c}_i\}_{i=1}^{K}$, it samples multiple semantic ranking profiles:
\begin{equation}
\bm{\alpha}^{(g)}
\sim
\pi_{\mathrm{dit}}
\left(
\cdot
\mid
\bm{z},
\{\bm{c}_i\}_{i=1}^{K}
\right),
\qquad
g=1,\ldots,G .
\label{eq:vlm_profile_sampling}
\end{equation}

Each sampled profile $\bm{\alpha}^{(g)}$ produces candidate scores
$S_i^{(g)}$ according to \eqref{eq:final_candidate_score}.
We convert these scores into a soft selection distribution over the top-$k$
candidates and compute the corresponding planning reward:
\begin{equation}
R_g
=
\sum_{i\in\mathcal{I}_{k}^{(g)}}
\omega_i^{(g)}
\mathcal{R}_{\mathrm{PDMS}}(\bm{\tau}_i),
\qquad
\omega_i^{(g)}
=
\frac{
\exp(S_i^{(g)})
}{
\sum_{j\in\mathcal{I}_{k}^{(g)}}
\exp(S_j^{(g)})
}.
\label{eq:soft_topk_reward}
\end{equation}
where $\mathcal{I}_{k}^{(g)}$ denotes the indices of the top-$k$ candidates ranked by $S_i^{(g)}$, and $\mathcal{R}_{\mathrm{PDMS}}(\bm{\tau}_i)$ denotes the PDMS score obtained by evaluating candidate trajectory $\bm{\tau}_i$~\citep{guan2021direct}.
The rewards defined in \eqref{eq:soft_topk_reward} are standardized within
each sampled group and used to optimize the Modulator with
STAPO~\citep{liu2026stapo}:
\begin{equation}
\begin{aligned}
\mathcal{J}_{\mathrm{STAPO}}(\theta)
&=
\mathbb{E}_{g}
\left[
\min
\left(
\rho_g(\theta)A_g,
\operatorname{clip}
\left(
\rho_g(\theta),
1-\epsilon_{\mathrm{low}},
1+\epsilon_{\mathrm{high}}
\right)
A_g
\right)
\right], \\
A_g
&=
\frac{R_g-\mu_R}{\sigma_R},
\qquad
\rho_g(\theta)
=
\frac{
\pi_{\mathrm{dit}}
\left(
\bm{\alpha}^{(g)}
\mid
\bm{z},\{\bm{c}_i\}_{i=1}^{K}
\right)
}{
\pi_{\mathrm{dit}_{\mathrm{old}}}
\left(
\bm{\alpha}^{(g)}
\mid
\bm{z},\{\bm{c}_i\}_{i=1}^{K}
\right)
},
\end{aligned}
\label{eq:stapo_objective}
\end{equation}
where $\theta$ denotes the parameters of the Modulator, $\mu_R$ and $\sigma_R$ denote the within-group mean and standard deviation of the sampled rewards, respectively, and $\rho_g(\theta)$ denotes the importance ratio between the current and behavior policies, computed following~\citep{li2026recogdrive}. STAPO further masks abnormal gradients induced by outlier policy
updates, thereby stabilizing the optimization process. By optimizing semantic ranking policies with downstream planning rewards,
the Modulator learns to adapt candidate preferences according to scene
semantics~\citep{li2023reinforcement}.


\section{Experiments}
\subsection{Settings}
\textbf{Experimental Setup.}
For trajectory generation, we adopt the multi-modal planner of DrivoR~\citep{kirby2026drivor} and follow its standard configuration. For semantic modulation, we use a frozen InternVL3-2B initialized with the pretrained weights from ReCogDrive~\citep{li2026recogdrive}. All models are trained on 8 NVIDIA A800 GPUs following a three-stage training procedure, with RADAR~\citep{ren2026momentum} used as the optimizer. In Stage 1, we train the model for 25 epochs with a
learning rate of $2\times10^{-4}$. In Stage 2, the learning rate is
reduced to $1\times10^{-5}$, and the model is trained for an additional
30 epochs. In the third stage, we train the VLM-conditioned diffusion module
for 25 epochs with a learning rate of $1\times10^{-5}$. Further training and optimization details are provided in Appendix~\ref{app:Implementation details}.

\textbf{Dataset.}
The model is trained primarily on NAVSIM~\citep{dauner2024navsim}, a planning-oriented autonomous driving dataset from OpenScene. NAVSIM contains approximately 103K training samples and 12K test samples. To improve data diversity, we augment its real-world training set with 134K simulated samples generated by SimScale~\citep{tian2026simscale}. These samples are obtained by perturbing ego trajectories and rendering them in reactive environments.

\textbf{Metrics.}
We evaluate trajectory quality through closed-loop simulation in NAVSIM,
reporting the Predictive Driver Model Score (PDMS) for NAVSIM v1 and the
Extended Predictive Driver Model Score (EPDMS) for NAVSIM v2. Detailed
definitions and formulations are provided in
Appendix~\ref{app:evaluation_metrics}. 
\begin{table*}[!t]
\centering
\caption{
\textbf{Closed-loop evaluation on NAVSIM v1.}
Higher values indicate better performance.
}
\label{tab:navsim_v1}

\definecolor{rowblueDark}{RGB}{225,238,248}

\footnotesize
\setlength{\tabcolsep}{4.5pt}
\renewcommand{\arraystretch}{1.05}

\begin{tabularx}{\textwidth}{
>{\raggedright\arraybackslash}X
*{5}{>{\centering\arraybackslash}p{1.15cm}}
|
>{\centering\arraybackslash}p{1.15cm}
}
\toprule

\textbf{Method}
& \textbf{NC$\uparrow$}
& \textbf{DAC$\uparrow$}
& \textbf{TTC$\uparrow$}
& \textbf{Comf.$\uparrow$}
& \textbf{EP$\uparrow$}
& \textbf{PDMS$\uparrow$} \\

\midrule

\rowcolor{gray!15}
PDM-Closed~\citep{dauner2023parting}
& 94.6 & 99.8 & 89.9 & 86.9 & 99.9 & 89.1 \\

\rowcolor{gray!15}
Human driver~\citep{dauner2024navsim}
& 100.0 & 100.0 & 100.0 & 99.9 & 87.5 & 94.8 \\

\midrule
\multicolumn{7}{l}{\textit{E2E Methods}} \\
\midrule

UniAD~\citep{hu2023uniad}
& 97.8 & 91.9 & 92.9 & 100.0 & 78.8 & 83.4 \\

LTF~\citep{chitta2023transfuser}
& 97.4 & 92.8 & 92.4 & 100.0 & 79.0 & 83.8 \\

PARA-Drive~\citep{weng2024paradrive}
& 97.9 & 92.4 & 93.0 & 99.8 & 79.3 & 84.0 \\

DriveX-S~\citep{shi2025drivex}
& 97.5 & 94.0 & 93.0 & 100.0 & 79.7 & 84.5 \\

World4Drive~\citep{zheng2025world4drive}
& 97.4 & 94.3 & 92.8 & 100.0 & 79.9 & 85.1 \\

DRAMA~\citep{yuan2024drama}
& 98.0 & 93.1 & 94.8 & 100.0 & 80.1 & 85.5 \\

DiffusionDrive~\citep{liao2025diffusiondrive}
& 98.2 & 96.2 & 94.7 & 100.0 & 82.2 & 88.1 \\

iPad~\citep{guo2025ipad}
& 98.6 & 98.3 & 94.9 & 100.0 & 88.0 & 91.7 \\

DrivoR~\citep{kirby2026drivor}
& 99.0 & 98.9 & 96.7 & 100.0 & 90.0 & 93.7 \\

RAP~\citep{feng2026rap}
& \underline{99.1} & 98.9 & 96.7 & 100.0 & 90.3 & 93.8 \\

CLOVER~\citep{ang2026clover}
& \underline{99.1} & 99.0 & 96.9 & 100.0 & 91.7 & 94.5 \\

TOAD~\citep{xu2026toad}
& 99.0 & \textbf{99.3} & 96.8 & 100.0 & 91.8 & 94.7 \\

\midrule
\multicolumn{7}{l}{\textit{VLA Methods}} \\
\midrule

UniVLA~\citep{wang2025univla}
& 96.9 & 91.1 & 91.7 & 96.7 & 76.8 & 81.7 \\

FSDrive~\citep{zeng2025fsdrive}
& 98.2 & 93.8 & 93.3 & 99.9 & 80.1 & 85.1 \\

AutoVLA~\citep{zhou2025autovla}
& 98.4 & 95.6 & \textbf{98.0} & \underline{99.9} & 81.9 & 89.1 \\

DriveVLA-W0~\citep{li2026drivevlaw0}
& 98.7 & \underline{99.1} & 95.3 & 99.3 & 83.3 & 90.2 \\

AdaThinkDrive~\citep{luo2026adathinkdrive}
& 98.4 & 97.8 & 95.2 & 100.0 & 84.4 & 90.3 \\

SpanVLA~\citep{zhou2026spanvla}
& \underline{99.1} & 97.1 & 95.2 & 100.0 & 86.3 & 90.3 \\

ReCogDrive~\citep{li2026recogdrive}
& 97.9 & 97.3 & 94.9 & 100.0 & 87.3 & 90.8 \\

ELF-VLA~\citep{luo2026elfvla}
& 98.9 & 98.1 & 96.0 & 100.0 & 85.3 & 91.0 \\

SGDrive~\citep{li2026sgdrive}
& 98.6 & 97.8 & 96.2 & 100.0 & 85.8 & 91.1 \\

FLARE~\citep{xie2026flare}
& 98.5 & 98.4 & 96.0 & 100.0 & 86.0 & 91.4 \\

CLEAR~\citep{xing2026clear}
& \underline{99.1}
& 98.8
& \underline{97.2}
& 99.6
& 89.7
& 93.7 \\

ChainFlow-VLA~\citep{wang2026chainflowvla}
& \textbf{99.2}
& 99.0
& \underline{97.2}
& 99.9
& \underline{91.9}
& \underline{94.8} \\

\rowcolor{rowblueDark}
\textbf{iDriveVLA}
& 98.9
& \textbf{99.3}
& 96.4
& \textbf{100.0}
& \textbf{92.9}
& \textbf{94.9} \\

\bottomrule
\end{tabularx}
\end{table*}

\begin{table*}[!t]
\centering
\caption{
\textbf{Closed-loop evaluation on NAVSIM v2.}
All results are computed using the original NAVSIM v2 evaluation code
without the human-behavior filtering fix.
}
\label{tab:navsim_v2}

\definecolor{rowblueDark}{RGB}{225,238,248}

\setlength{\tabcolsep}{4.0pt}
\renewcommand{\arraystretch}{1.08}

\resizebox{\textwidth}{!}{
\begin{tabular}{lccccccccc|c}
\toprule

\textbf{Method}
& \textbf{NC$\uparrow$}
& \textbf{DAC$\uparrow$}
& \textbf{DDC$\uparrow$}
& \textbf{TLC$\uparrow$}
& \textbf{EP$\uparrow$}
& \textbf{TTC$\uparrow$}
& \textbf{LK$\uparrow$}
& \textbf{HC$\uparrow$}
& \textbf{EC$\uparrow$}
& \textbf{EPDMS$\uparrow$} \\

\midrule
\multicolumn{11}{l}{\textit{E2E Methods}} \\
\midrule

Ego Status~\citep{cao2025pseudosimulation}
& 93.1 & 77.9 & 92.7 & 99.6 & 86.0
& 91.5 & 89.4 & 98.3 & 85.4 & 64.0 \\

TransFuser~\citep{chitta2023transfuser}
& 96.9 & 89.9 & 97.8 & 99.7 & 87.1
& 95.4 & 92.7 & 98.3 & \underline{87.2} & 76.7 \\

DiffusionDrive~\citep{liao2025diffusiondrive}
& 98.2 & 95.9 & \underline{99.4} & \underline{99.8} & 87.5
& 97.3 & \textbf{96.8} & 98.3 & \textbf{87.7} & 84.5 \\

Hydra-MDP++~\citep{li2025hydramdpplusplus}
& 98.5 & 98.5 & 99.5 & 99.7 & 87.4
& 97.9 & 95.8 & \underline{98.2} & 75.7 & 85.6 \\

CLOVER~\citep{ang2026clover}
& \textbf{99.4} & 99.3 & \textbf{99.5} & \textbf{99.8} & 86.9
& \textbf{98.9} & 95.2 & 98.3 & 75.5 & \underline{87.2} \\

\midrule
\multicolumn{11}{l}{\textit{VLA Methods}} \\
\midrule

DriveVLA-W0~\citep{li2026drivevlaw0}
& 98.5 & \underline{99.1} & 98.0 & 99.7 & 86.4
& 98.1 & 93.2 & 97.9 & 58.9 & 86.1 \\

ReCogDrive~\citep{li2026recogdrive}
& 98.3 & 95.2 & 99.5 & 99.8 & 87.1
& 97.5 & \underline{96.6} & 98.3 & 86.5 & 83.6 \\

LaST-VLA~\citep{luo2026lastvla}
& 98.7 & 97.9 & 99.2 & 99.7 & \underline{90.3}
& 98.2 & \underline{96.6} & 98.3 & 86.3 & 87.1 \\

\rowcolor{rowblueDark}
\textbf{iDriveVLA}
& \underline{98.9}
& \textbf{99.3}
& 99.0
& 96.8
& \textbf{91.6}
& \underline{98.4}
& 85.3
& \textbf{98.3}
& 77.9
& \textbf{87.4} \\

\bottomrule
\end{tabular}
}
\end{table*}

\subsection{Main Results}
As shown in Table~\ref{tab:navsim_v1}, iDriveVLA achieves the \textbf{highest}  PDMS on NAVSIM v1, reaching \textbf{94.9} and ranking \textbf{1st} among all compared methods. It outperforms both TOAD, the strongest existing end-to-end driving method, and ChainFlow-VLA, the best-performing VLA-based approach. These results demonstrate that improving trajectory evaluation can effectively unlock the potential of multi-modal trajectory planning, rather than relying solely on stronger trajectory generation. Notably, iDriveVLA also surpasses the human-driver reference score of 94.8 PDMS. This further suggests that incorporating safety-aware evaluation and VLM-based semantic modulation can improve the planner's ability to identify high-quality trajectories and push overall driving performance beyond the human-reference level on this benchmark.

To further evaluate iDriveVLA under a more comprehensive driving protocol, we report results on NAVSIM v2, which extends PDMS with additional criteria for driving-direction and traffic-light compliance, lane keeping, and temporal comfort. As shown in Table~\ref{tab:navsim_v2}, iDriveVLA achieves state-of-the-art performance with an EPDMS of 87.4. Compared with LaST-VLA and CLOVER, iDriveVLA maintains competitive safety and compliance performance while achieving higher ego progress, indicating robust trajectory selection under the more demanding evaluation protocol.

\subsection{Ablation Studies}
We further analyze the contributions of the three key components of iDriveVLA in Table~\ref{tab:ablation}. Starting from the DrivoR baseline with a PDMS of 93.7, introducing the Safety-aware Scorer improves performance to 93.9 by suppressing risky candidates without sacrificing driving efficiency. Candidate space refinement further raises PDMS to 94.2, demonstrating that refinement fine-tuning improves the quality of the candidate set before trajectory selection. Finally, adding the VLM-guided Modulator achieves the best overall performance by incorporating high-level scene semantics into trajectory evaluation, enabling more context-adaptive candidate evaluation.

\begin{table}[!t]
\centering
\caption{
\textbf{Ablation study of the proposed components on NAVSIM v1.}
Safety Gate denotes the Safety-aware Scorer, Refinement denotes candidate space refinement fine-tuning, and Modulator denotes the VLM-guided Modulator. All variants are trained exclusively on NAVSIM data.
}
\label{tab:ablation}

\definecolor{ablue}{RGB}{240,247,252}

\resizebox{\linewidth}{!}{
\begin{tabular}{cccccccccc}
\toprule

\multicolumn{3}{c}{\textbf{Components}}
&
\multicolumn{6}{c}{\textbf{Planning Metrics}}
&
\multirow{2}{*}{\textbf{$\boldsymbol{\Delta}$ PDMS}}
\\

\cmidrule(lr){1-3}
\cmidrule(lr){4-9}

\textbf{Safety Gate}
& \textbf{Refinement}
& \textbf{Modulator}
& \textbf{NC$\uparrow$}
& \textbf{DAC$\uparrow$}
& \textbf{TTC$\uparrow$}
& \textbf{Comf.$\uparrow$}
& \textbf{EP$\uparrow$}
& \textbf{PDMS$\uparrow$}
& \\

\midrule

\rowcolor{gray!10}
\multicolumn{3}{c}{\textbf{Base Planner}}
& 99.0 & 98.9 & 96.7 & 100.0 & 90.0 & 93.7 & -- \\

\checkmark
&
&
& 99.1 & 98.9 & 96.7 & 100.0 & 91.1 & 93.9 & +0.2 \\

\checkmark
& \checkmark
&
& 99.0 & 99.0 & 96.6 & 100.0 & 91.3 & 94.2 & +0.5 \\

\rowcolor{ablue}
\checkmark
& \checkmark
& \checkmark
& 99.1 & 99.1 & 96.6 & 100.0 & 91.6 & 94.4 & +0.7 \\

\bottomrule
\end{tabular}
}
\end{table}
\subsection{Qualitative Results}
\begin{figure*}[!t]
  \centering
  \begin{minipage}[t]{0.495\textwidth}
    \centering
    \includegraphics[width=\linewidth]{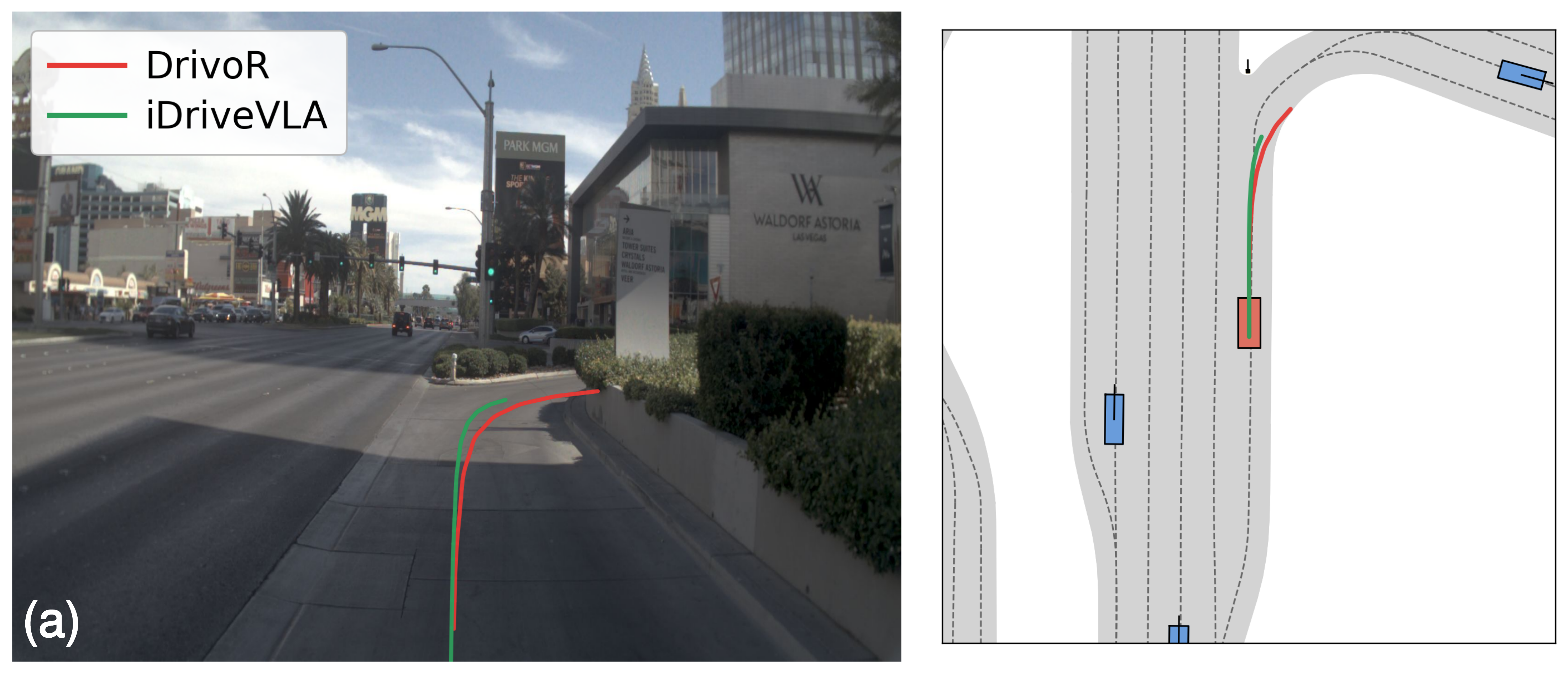}
  \end{minipage}\hfill
  \begin{minipage}[t]{0.495\textwidth}
    \centering
    \includegraphics[width=\linewidth]{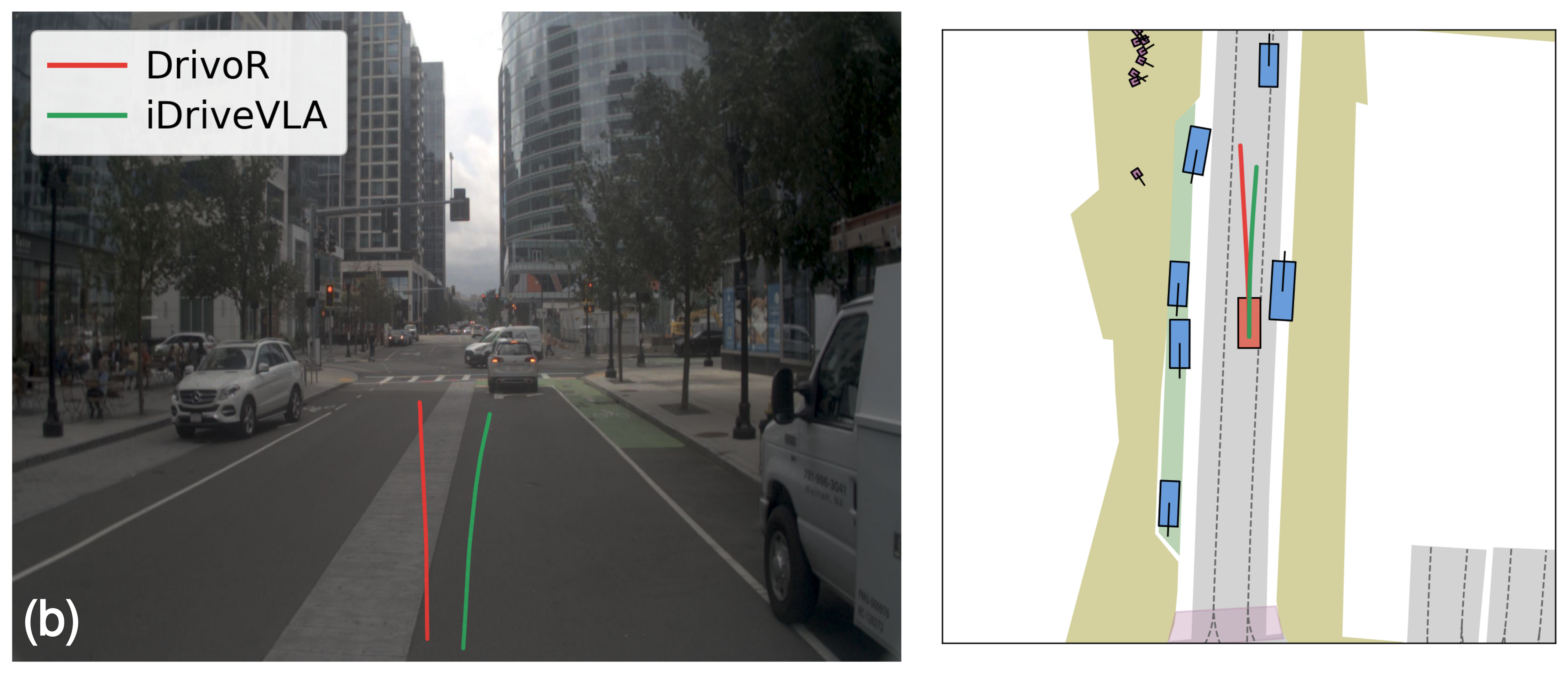}
  \end{minipage}
  \begin{minipage}[t]{0.495\textwidth}
    \centering
    \includegraphics[width=\linewidth]{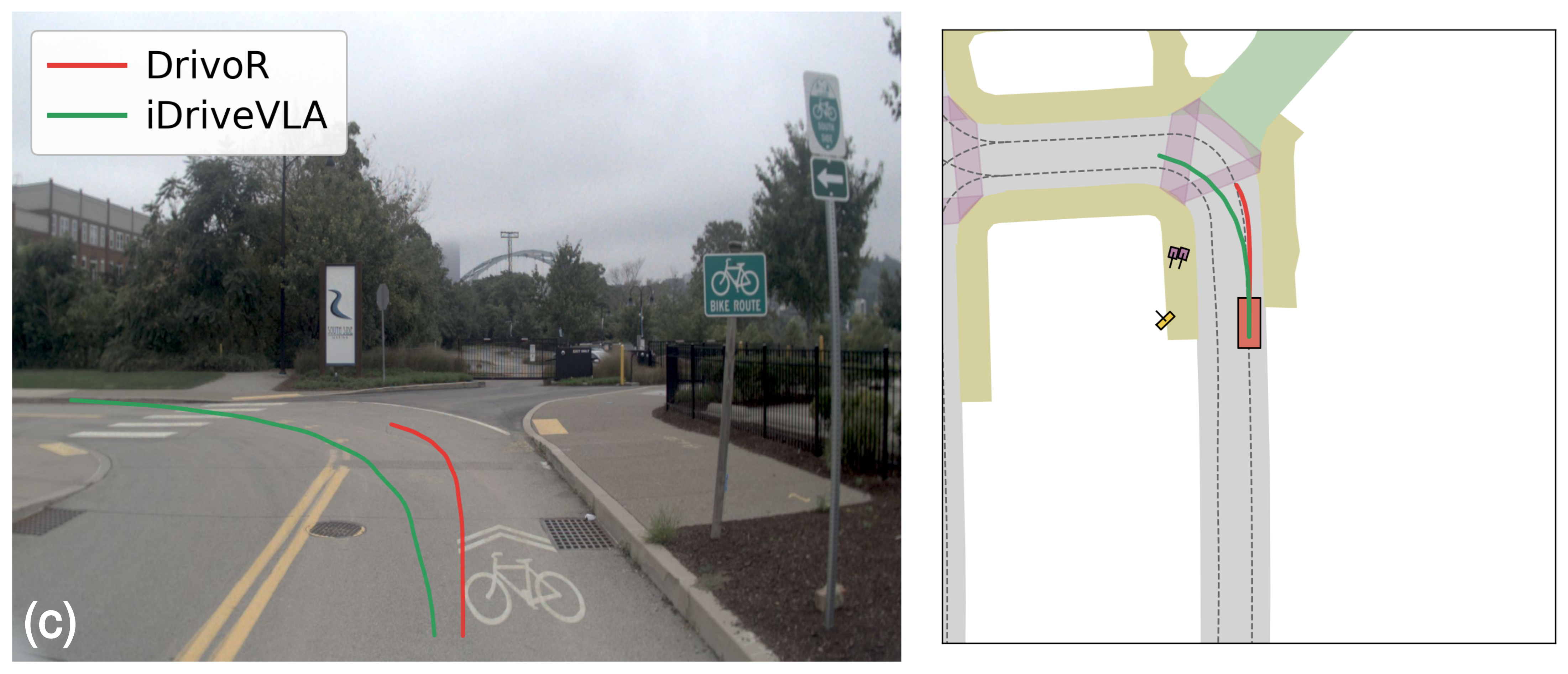}
  \end{minipage}\hfill
  \begin{minipage}[t]{0.495\textwidth}
    \centering
    \includegraphics[width=\linewidth]{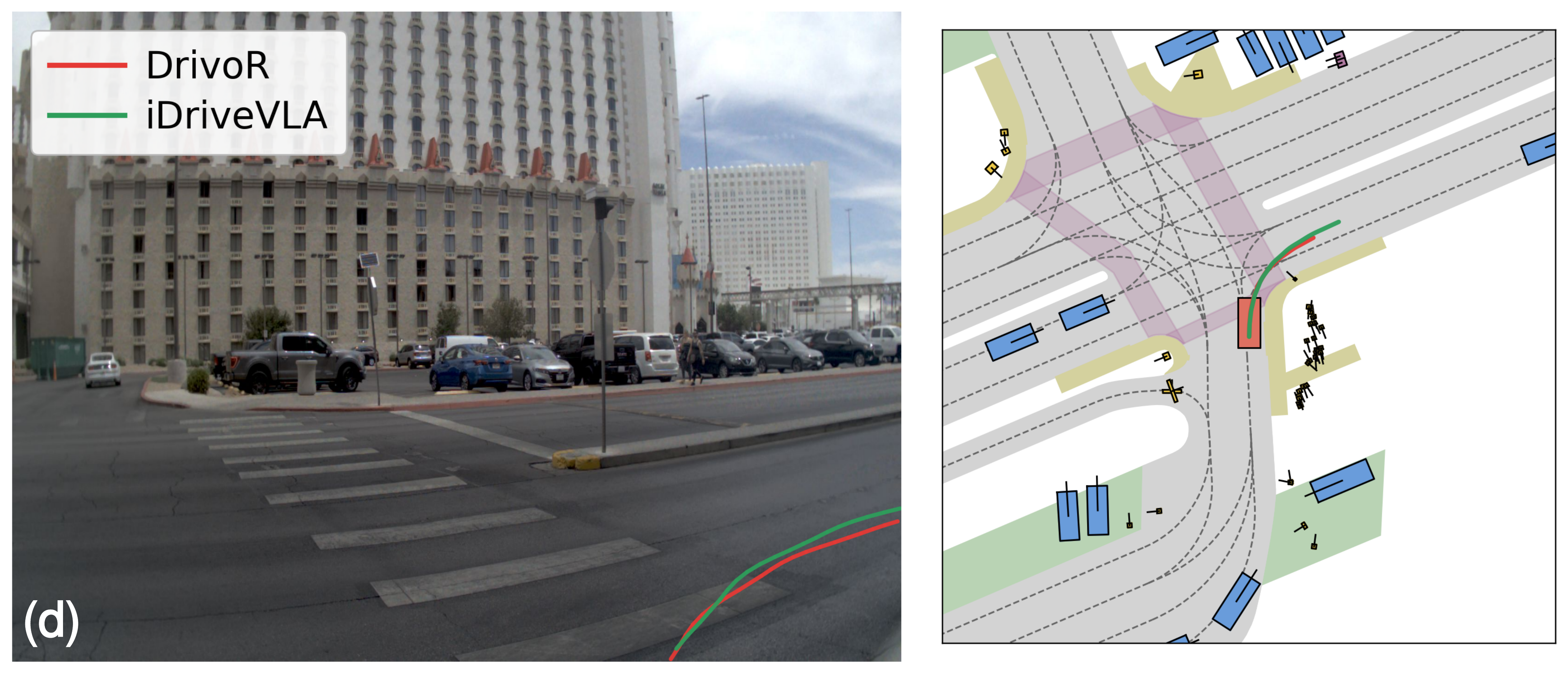}
  \end{minipage}
  \caption{Comparison between DrivoR and iDriveVLA. Red and green curves
  denote the DrivoR and iDriveVLA trajectories, respectively. The paired camera
  and bird's-eye-view panels show how the predicted paths differ under varied
  road geometry and surrounding traffic.}
  \label{fig:qualitative_pairwise}
\end{figure*}
\begin{figure*}[!t]
  \centering
  \begin{minipage}[t]{0.495\textwidth}
    \centering
    \includegraphics[width=\linewidth]{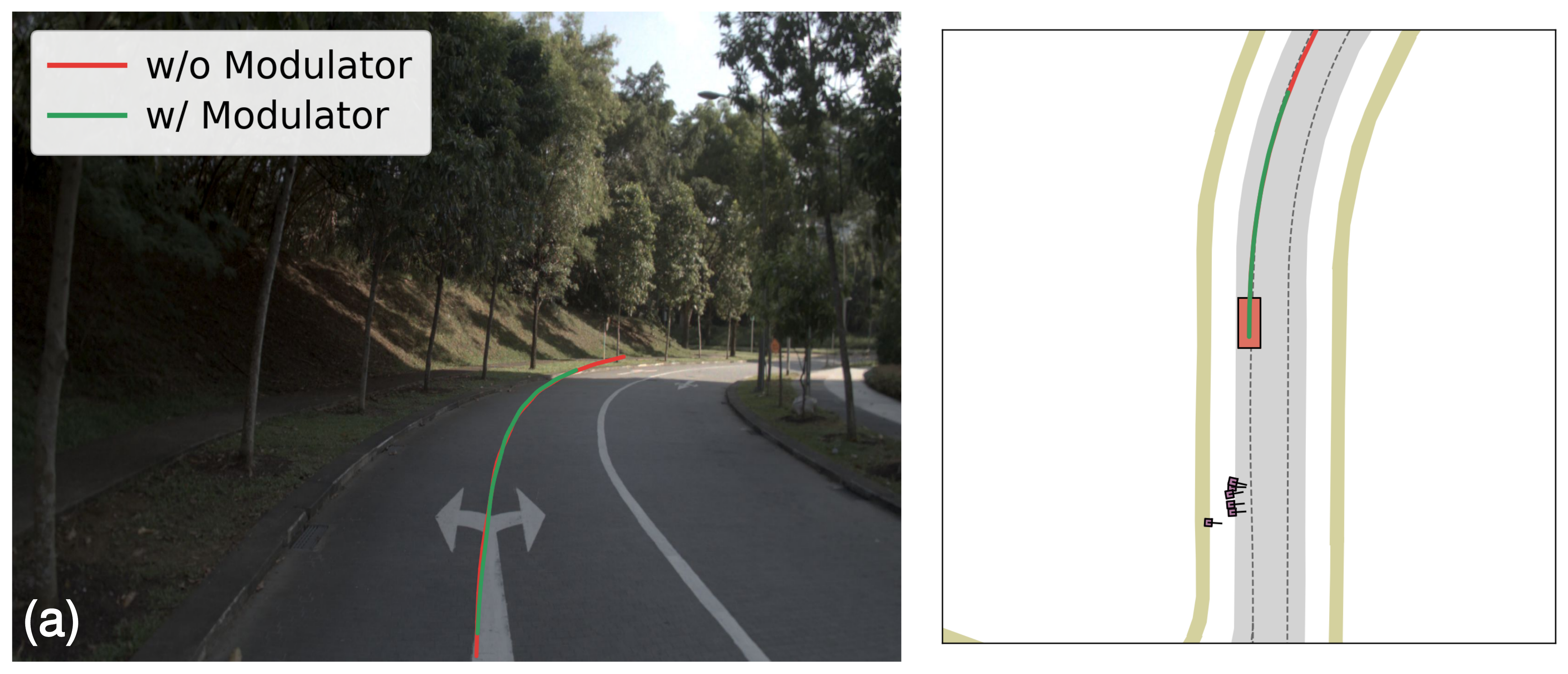}
  \end{minipage}\hfill
  \begin{minipage}[t]{0.495\textwidth}
    \centering
    \includegraphics[width=\linewidth]{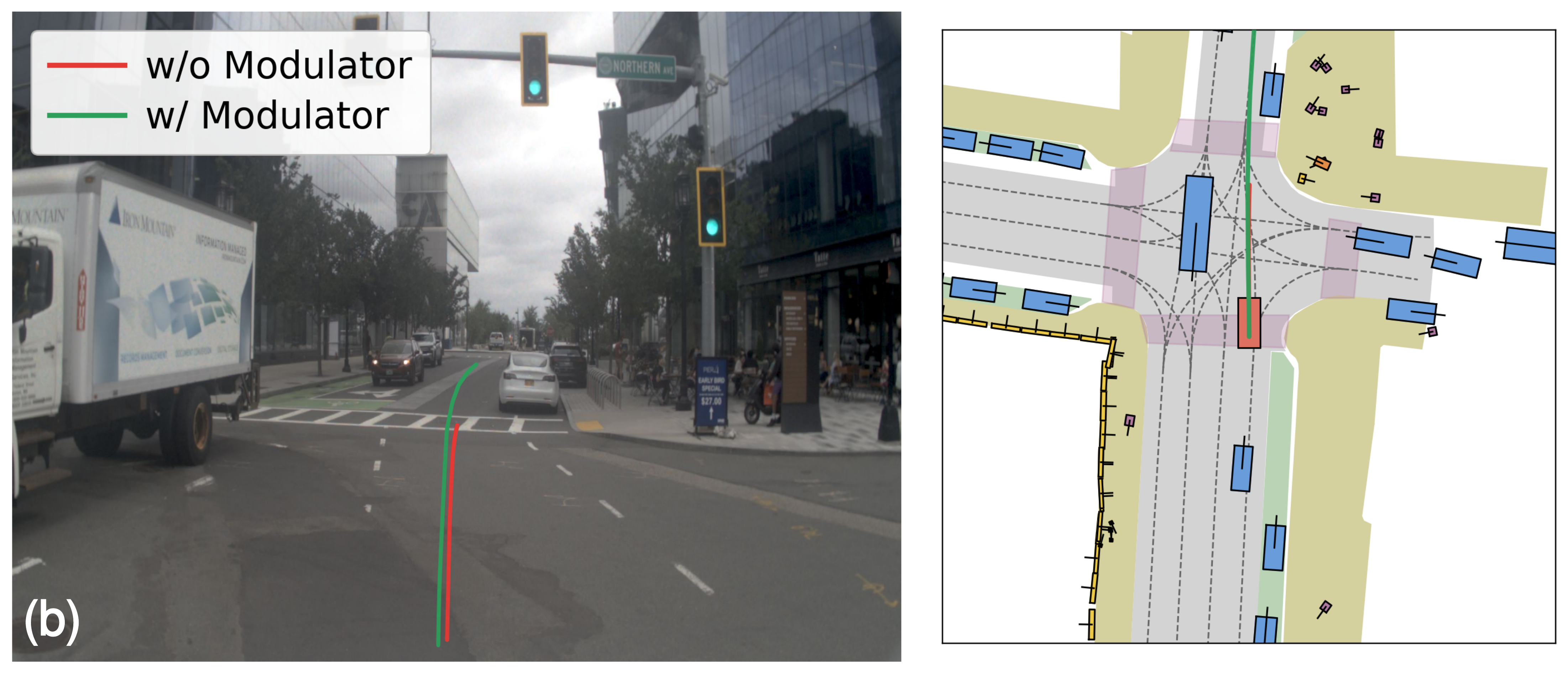}
  \end{minipage}

  \begin{minipage}[t]{0.495\textwidth}
    \centering
    \includegraphics[width=\linewidth]{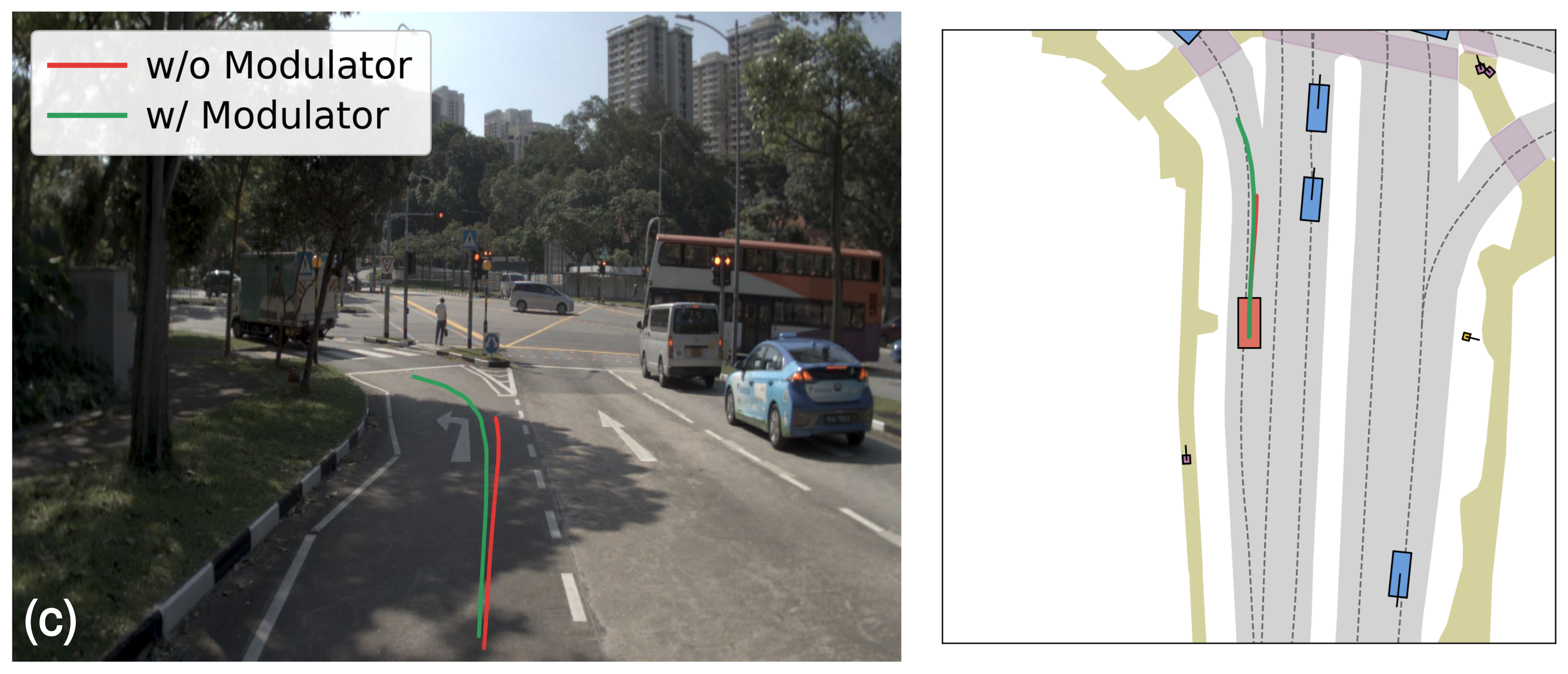}
  \end{minipage}\hfill
  \begin{minipage}[t]{0.495\textwidth}
    \centering
    \includegraphics[width=\linewidth]{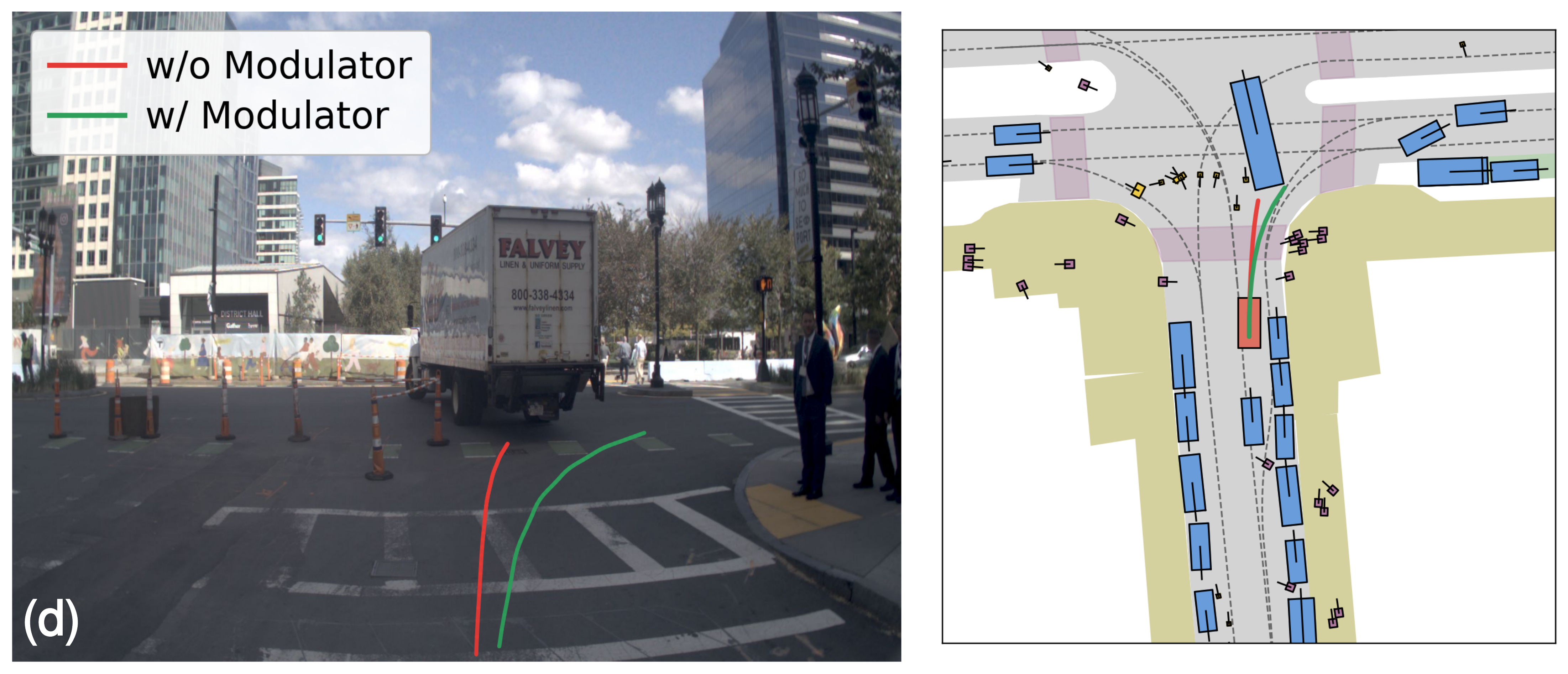}
  \end{minipage}
  \caption{Qualitative comparison of trajectory selection without and with the
  VLM-guided Modulator. Red and green curves denote the selected trajectories
  without and with the Modulator, respectively. The Modulator changes the
  ranking of existing candidates according to the surrounding road structure
  and traffic context.}
  \label{fig:qualitative_modulator}
\end{figure*}
We further visualize the trajectories selected by iDriveVLA and the base planner DrivoR in representative driving scenarios, as shown in Figure~\ref{fig:qualitative_pairwise}. The red and green curves denote the trajectories selected by DrivoR and iDriveVLA, respectively. In Figures\hyperref[fig:qualitative_pairwise]{~\ref*{fig:qualitative_pairwise}(a)}
and \hyperref[fig:qualitative_pairwise]{\ref*{fig:qualitative_pairwise}(b)}, DrivoR selects trajectories that lead to collisions or departures from the drivable area, whereas iDriveVLA avoids these failures while maintaining forward progress. As Figures\hyperref[fig:qualitative_pairwise]{~\ref*{fig:qualitative_pairwise}(c)}
and \hyperref[fig:qualitative_pairwise]{\ref*{fig:qualitative_pairwise}(d)} show, DrivoR exhibits overly conservative behavior during turning and lane-changing maneuvers, resulting in limited progress. In contrast, iDriveVLA selects smoother and more appropriate trajectories that remain within the drivable area while advancing efficiently. These examples qualitatively demonstrate that context-aware trajectory evaluation enables iDriveVLA to select safer and more effective behaviors across diverse driving scenarios.

We also investigate the effect of the VLM-guided Modulator on trajectory selection in Figure~\ref{fig:qualitative_modulator}. With the candidate set held fixed, the Modulator alters only the selection preference. In Figure\hyperref[fig:qualitative_modulator]{~\ref*{fig:qualitative_modulator}(a)}, the trajectory selected with modulation, shown in green, follows the lane structure more closely while maintaining greater clearance from nearby obstacles. The remaining three cases further show that scene-level semantic guidance helps avoid overly conservative choices while preserving the existing safety calibration. Overall, the Modulator improves driving progress without introducing additional collisions or drivable-area violations in these examples. Additional comparisons with oracle trajectories are provided in
Appendix~\ref{app:oracle_selection}, showing that iDriveVLA can identify
optimal or near-optimal candidates.

\section{Conclusion}

Our work revisits multi-modal planning from the perspective of trajectory evaluation rather than candidate generation alone. Our results show that a considerable portion of planning performance is already contained in the candidate set, but remains inaccessible when the planner cannot reliably distinguish high-quality trajectories from suboptimal ones. Building on this observation, we develop iDriveVLA to jointly improve candidate quality and trajectory selection through safety-aware scoring, semantic modulation, and oracle-aligned training. Extensive experiments on NAVSIM v1 and v2 demonstrate consistent gains across overall planning quality, safety, compliance, and driving progress, with iDriveVLA achieving state-of-the-art performance on both benchmarks. 

Despite these improvements, a substantial gap between selected and oracle performance still remains, indicating that the potential of existing candidate sets is far from fully exploited.  We hope iDriveVLA can serve as a step toward establishing trajectory evaluation as a more explicit research direction in multi-modal autonomous driving and inspire future work on more reliable, adaptive, and semantically informed trajectory evaluation.

\subsection*{AI use statement}

Generative AI tools were employed in a limited supporting role during the preparation of this manuscript. Their use was restricted to improving linguistic clarity and presentation, assisting with routine code debugging and small code revisions, and helping prepare or refine illustrative schematic figures. They were not used to generate scientific content or to make substantive research decisions, including the formulation of research questions or hypotheses, conceptual or theoretical development, mathematical reasoning, methodology and experimental design, implementation of the proposed method, data construction or processing, qualitative assessment, or interpretation of experimental results.

All AI-assisted outputs were subsequently inspected and revised by the authors. Any modified code was validated within the experimental workflow, and all generated or edited figures were checked for consistency with the corresponding technical descriptions and results. The authors independently developed and verified the scientific ideas, methods, experiments, analyses, and conclusions, and remain fully responsible for the accuracy and integrity of the final manuscript.

\bibliography{references}
\bibliographystyle{iclr2026_conference}

\appendix
\section{Theoretical Analysis of Candidate Competition}
\label{app:candidate_competition_proof}

To provide a theoretical explanation for the trajectory evaluation bottleneck
discussed in Sec.~\ref{sec:trajectory_evaluation_bottleneck}, we consider a
stylized setting in which an oracle trajectory competes with multiple
near-optimal candidates under imperfect evaluation. The analysis characterizes
how increasing candidate competition affects oracle recovery when evaluation
errors are present.

\begin{proposition}[Candidate Competition under Imperfect Evaluation]
\label{thm:candidate_competition}
Let $r_i=\mathcal{R}(\bm{\tau}_i)$ denote the reference planning score of
candidate $\bm{\tau}_i$. Consider a candidate set containing one oracle
trajectory $i^\star$ with score $r^\star$ and $K-1$ competing trajectories
with score $r^\star-\gamma$, where $\gamma>0$. Suppose the evaluator predicts
\begin{equation}
s_i=r_i+\epsilon_i,
\end{equation}
where $\{\epsilon_i\}_{i=1}^{K}$ are i.i.d. evaluation errors drawn from a
continuous distribution with cumulative distribution function $F$.

Let
$\hat{i}=\arg\max_i s_i$
denote the selected candidate and define the evaluation gap as
$\Delta_{\mathrm{eval}}(K)=r^\star-r_{\hat{i}}$.
Then the probability of recovering the oracle trajectory is
\begin{equation}
P_K
=
\Pr(\hat{i}=i^\star)
=
\mathbb{E}_{\epsilon^\star}
\left[
F(\epsilon^\star+\gamma)^{K-1}
\right],
\label{eq:selection_probability}
\end{equation}
where $\epsilon^\star$ is the evaluation error associated with the oracle
trajectory. The expected evaluation gap is therefore
\begin{equation}
\mathbb{E}
\left[
\Delta_{\mathrm{eval}}(K)
\right]
=
\gamma(1-P_K).
\label{eq:expected_evaluation_gap}
\end{equation}

Moreover, if
$\Pr\!\left(0<F(\epsilon^\star+\gamma)<1\right)>0$,
then $P_K$ strictly decreases with $K$, while
$\mathbb{E}[\Delta_{\mathrm{eval}}(K)]$ strictly increases with $K$.
\end{proposition}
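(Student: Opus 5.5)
I will condition on the oracle error $\epsilon^\star$ and use the independence of the errors. Since the distribution is continuous, ties $s_i=s_j$ occur with probability zero, so $\hat{i}$ is almost surely well defined. The event $\{\hat{i}=i^\star\}$ coincides, up to a null set, with the event that $s_{i^\star}>s_j$ for every competitor $j\neq i^\star$. Substituting $s_{i^\star}=r^\star+\epsilon^\star$ and $s_j=r^\star-\gamma+\epsilon_j$, this condition is equivalent to $\epsilon_j<\epsilon^\star+\gamma$ for all $K-1$ competitors.

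Conditioning on $\epsilon^\star$, the $\epsilon_j$ are i.i.d. with CDF $F$ and independent of $\epsilon^\star$. The conditional probability of the event is therefore $F(\epsilon^\star+\gamma)^{K-1}$; continuity makes strict versus non-strict inequality irrelevant. Taking expectations via the tower property gives \eqref{eq:selection_probability}.

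For \eqref{eq:expected_evaluation_gap}, I use the two-valued structure of the scores. The gap $\Delta_{\mathrm{eval}}(K)$ equals $0$ when $\hat{i}=i^\star$ and equals $\gamma$ otherwise, because every non-oracle candidate has score exactly $r^\star-\gamma$. Hence $\Delta_{\mathrm{eval}}(K)=\gamma\,\mathbb{I}[\hat{i}\neq i^\star]$, and taking expectations yields $\gamma(1-P_K)$.

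For monotonicity, set $X=F(\epsilon^\star+\gamma)\in[0,1]$, so that $P_K=\mathbb{E}[X^{K-1}]$. Then
\begin{equation}
P_K-P_{K+1}=\mathbb{E}\big[X^{K-1}(1-X)\big].
\end{equation}
The integrand is nonnegative everywhere and strictly positive on the event $\{0<X<1\}$. This event has positive probability by hypothesis, so the expectation is strictly positive and $P_K$ strictly decreases in $K$. Since $\gamma>0$, the gap $\gamma(1-P_K)$ then strictly increases in $K$.

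The main subtlety is this last step: strict positivity needs the hypothesis $\Pr(0<X<1)>0$. Without it, $X$ could be almost surely $0$ or $1$ and $P_K$ would be constant for $K\ge 2$. I would also remark that the case $K=1$, where $P_1=1$, is covered by the same formula.
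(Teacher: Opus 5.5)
Your proposal is correct and follows the paper's proof essentially step for step: you condition on $\epsilon^\star$ to get $F(\epsilon^\star+\gamma)^{K-1}$, write the gap as $\gamma$ times the indicator of missing the oracle, and use $P_K-P_{K+1}=\mathbb{E}[X^{K-1}(1-X)]$ with $X=F(\epsilon^\star+\gamma)$. Your explicit treatment of ties, and of strict versus non-strict inequalities via continuity of $F$, makes rigorous a point the paper uses only implicitly.
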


\begin{proof}
Condition on the oracle evaluation error $\epsilon^\star=e$.
A competing trajectory is ranked below the oracle trajectory if
\[
r^\star-\gamma+\epsilon_i
<
r^\star+e,
\]
which is equivalent to
$\epsilon_i<e+\gamma$.
This event occurs with probability $F(e+\gamma)$.
Since the $K-1$ competing trajectories have independent evaluation errors,
the conditional probability that all of them are ranked below the oracle is
\begin{equation}
\Pr(\hat{i}=i^\star\mid\epsilon^\star=e)
=
F(e+\gamma)^{K-1}.
\end{equation}
Taking the expectation over $\epsilon^\star$ yields
\eqref{eq:selection_probability}.

Under the assumed score structure, the evaluation gap is zero when the oracle
trajectory is selected and equals $\gamma$ otherwise. Hence,
\begin{equation}
\mathbb{E}
\left[
\Delta_{\mathrm{eval}}(K)
\right]
=
\gamma\Pr(\hat{i}\neq i^\star)
=
\gamma(1-P_K),
\end{equation}
which gives \eqref{eq:expected_evaluation_gap}.

To establish monotonicity, define
\[
X=F(\epsilon^\star+\gamma)\in[0,1].
\]
Then
\begin{equation}
P_K-P_{K+1}
=
\mathbb{E}
\left[
X^{K-1}(1-X)
\right].
\end{equation}
Since $X^{K-1}(1-X)\geq0$, we have $P_{K+1}\leq P_K$.
Furthermore, if
$\Pr(0<X<1)>0$,
then
$\mathbb{E}[X^{K-1}(1-X)]>0$,
and therefore
$P_{K+1}<P_K$.
Combining this result with
\eqref{eq:expected_evaluation_gap} immediately gives
\[
\mathbb{E}[\Delta_{\mathrm{eval}}(K+1)]
>
\mathbb{E}[\Delta_{\mathrm{eval}}(K)].
\]
Thus, under imperfect evaluation, increasing the number of competing
trajectories reduces the probability of recovering the oracle and enlarges
the expected evaluation gap.
\end{proof}

This result provides a simplified explanation for the
generation--evaluation asymmetry observed in multi-candidate planning.
Increasing the number of trajectory hypotheses can improve candidate coverage
and oracle potential, but it also exposes the evaluator to more competing
alternatives. Consequently, when evaluation remains imperfect, stronger
candidate generation alone does not guarantee a comparable improvement in final trajectory selection.

\section{Interpretation of Candidate Space Refinement}
\label{app:rft_derivation}

We provide a simple interpretation of the exponential weighting used in
Stage~2. Let
$\bm{\xi}=(\bm{x},\bm{\tau}^{\star})$
denote a supervised training example drawn from the empirical demonstration
distribution $p_{\mathcal D}(\bm{\xi})$, and let
$R(\bm{\xi})=R(\bm{x},\bm{\tau}^{\star})$ denote its downstream planning reward.
We consider a reward-tilted demonstration distribution that favors
higher-reward examples while remaining close to the original training
distribution:
\begin{equation}
q^{\star}
=
\arg\max_q
\left[
\mathbb{E}_{\bm{\xi}\sim q}[R(\bm{\xi})]
-
\frac{1}{\beta}
D_{\mathrm{KL}}
\left(
q(\bm{\xi})\Vert p_{\mathcal D}(\bm{\xi})
\right)
\right].
\label{eq:app_reward_tilt}
\end{equation}

Introducing a Lagrange multiplier for the normalization constraint
$\int q(\bm{\xi})\,d\bm{\xi}=1$, the stationary condition is
\begin{equation}
R(\bm{\xi})
-
\frac{1}{\beta}
\left(
\log
\frac{q^{\star}(\bm{\xi})}
{p_{\mathcal D}(\bm{\xi})}
+1
\right)
+\lambda
=0,
\end{equation}
which yields the Gibbs-form solution
\begin{equation}
q^{\star}(\bm{\xi})
=
\frac{
p_{\mathcal D}(\bm{\xi})
\exp\left(\beta R(\bm{\xi})\right)
}{
Z_{\beta}
},
\qquad
Z_{\beta}
=
\mathbb{E}_{\bm{\xi}\sim p_{\mathcal D}}
\left[
\exp\left(\beta R(\bm{\xi})\right)
\right].
\label{eq:app_gibbs_solution}
\end{equation}
Therefore, Eq.~\eqref{eq:app_reward_tilt} corresponds to exponentially
tilting the original demonstration distribution according to downstream
planning quality.

Training the candidate generator under the tilted distribution in
Eq.~\eqref{eq:app_gibbs_solution} gives
\begin{equation}
\mathbb{E}_{\bm{\xi}\sim q^{\star}}
\left[
\mathcal{L}_{\mathrm{traj}}(\bm{\xi};\phi)
\right]
=
\frac{1}{Z_{\beta}}
\mathbb{E}_{\bm{\xi}\sim p_{\mathcal D}}
\left[
\exp\left(\beta R(\bm{\xi})\right)
\mathcal{L}_{\mathrm{traj}}(\bm{\xi};\phi)
\right].
\label{eq:app_weighted_rft}
\end{equation}
Since $Z_{\beta}$ is independent of the candidate-generator parameters
$\phi$, minimizing Eq.~\eqref{eq:app_weighted_rft} is equivalent, up to a
positive constant factor, to the reward-weighted objective in
Eq.~\eqref{eq:rft_objective}.

Importantly, our supervision is deterministic: each scene $\bm{x}$ is
associated with a single expert trajectory
$\bm{\tau}^{\star}=f^{\star}(\bm{x})$, such that
\begin{equation}
p_{\mathcal D}(\bm{x},\bm{\tau})
=
p_{\mathcal D}(\bm{x})
\delta\left(
\bm{\tau}-f^{\star}(\bm{x})
\right).
\end{equation}
Under this setting, the exponential tilting in
Eq.~\eqref{eq:app_gibbs_solution} does not change the supervision target
within an individual scene. Instead, it reweights the relative contributions
of different demonstrations, assigning greater optimization weight to scenes
whose expert trajectories achieve higher downstream planning rewards.
Consequently, Stage~2 remains a fully supervised refinement procedure, while
the shared candidate generator is preferentially optimized toward trajectory
patterns associated with higher planning quality.

\section{Experiment Details}
\subsection{Evaluation Metrics}
\label{app:evaluation_metrics}

We employ benchmark-specific metrics to comprehensively evaluate planning
performance. For closed-loop planning, we report the Predictive Driver Model
Score (PDMS) on NAVSIM v1 and the Extended Predictive Driver Model Score
(EPDMS) on NAVSIM v2.

\paragraph{PDMS on NAVSIM v1.}
PDMS integrates five sub-metrics: No At-Fault Collision (NC), Drivable Area
Compliance (DAC), Time-to-Collision (TTC), Comfort (C), and Ego Progress (EP).
It is computed as
\begin{equation}
\mathrm{PDMS}
=
\mathrm{NC}
\times
\mathrm{DAC}
\times
\left(
\frac{
5 \times \mathrm{EP}
+
5 \times \mathrm{TTC}
+
2 \times \mathrm{C}
}{12}
\right).
\label{eq:pdms}
\end{equation}

\paragraph{EPDMS on NAVSIM v2.}
EPDMS extends PDMS with additional components that provide a more comprehensive
evaluation of safety, traffic-rule compliance, driving progress, lane keeping,
and comfort. It includes No At-Fault Collision (NC), Drivable Area Compliance
(DAC), Driving Direction Compliance (DDC), Traffic Light Compliance (TLC),
Ego Progress (EP), Lane Keeping (LK), History Comfort (HC),
Time-to-Collision (TTC), and Extended Comfort (EC). It is computed as
\begin{equation}
\mathrm{EPDMS}
=
\mathrm{NC}
\times
\mathrm{DAC}
\times
\mathrm{DDC}
\times
\mathrm{TLC}
\times
\left(
\frac{
5 \times \mathrm{EP}
+
2 \times \mathrm{LK}
+
2 \times \mathrm{HC}
+
5 \times \mathrm{TTC}
+
2 \times \mathrm{EC}
}{16}
\right).
\label{eq:epdms}
\end{equation}





\subsection{Implementation Details}
\label{app:Implementation details}

The trajectory generator follows the standard configuration of DrivoR~\citep{kirby2026drivor}. It takes 4 camera views and the ego state as input. We use DINOv2 ViT-S as the visual encoder and adapt it using LoRA, producing 16 visual registers for each view. The trajectory decoder and Scorer each consist of 4 layers. The Safety Head is implemented as a 2-layer MLP that maps candidate features to 2 binary risk predictions: collision and departure from the drivable area. For semantic modulation, we use a frozen InternVL3-2B initialized with the pretrained weights from ReCogDrive. Its hidden features condition an 8-block DiT Modulator. The main model and training configurations are summarized in Table~\ref{tab:model_details}.
\begin{table}[t]
\centering
\caption{\textbf{Model and training configurations.}}
\label{tab:model_details}

\small
\setlength{\tabcolsep}{12pt}
\renewcommand{\arraystretch}{0.98}

\begin{tabular}{@{}lc@{}}
\toprule
\textbf{Configuration} & \textbf{Value} \\
\midrule

Candidate number $K$
& 64 \\

Future planning steps $T$
& 8 \\

Planning time interval
& 0.5 s \\

Visual registers per view
& 16 \\

Trajectory decoder layers
& 4 \\

Scorer layers
& 4 \\

DiT blocks
& 8 \\

DiT sampling steps
& 5 \\

Global batch size
& 64 \\

\midrule

Safety threshold $\delta$
& 0.95 \\

Evaluator loss weight $\lambda_{\mathrm{eval}}$
& 1 \\

Reward coefficient $\beta$
& 1 \\

Profiles per group $G$
& 8 \\

Top-$k$ candidates $k$
& 16 \\

Lower clipping bound $\epsilon_{\mathrm{low}}$
& 0.2 \\

Upper clipping bound $\epsilon_{\mathrm{high}}$
& 0.28\\

\bottomrule
\end{tabular}
\end{table}
\section{Supplementary Experimental Results}
\subsection{Computational Time and Memory Analysis}
\label{app:time_memory}
We profile the computational and memory overhead of iDriveVLA on an A800 GPU with a single-element batch. The complete pipeline requires 297.20 ms per sample, only 186.10 ms more than DrivoR. This sub-second latency satisfies the real-time inference requirements of practical driving applications.
\begin{table}[t]
    \centering
    \small
\caption{Computational and memory overhead of the iDriveVLA inference pipeline.}
    \label{tab:runtime_memory}
    \renewcommand{\arraystretch}{1.15}
    \begin{tabular}{lcc}
        \toprule
        \textbf{Module}
        & \textbf{Time (ms)}
        & \textbf{Parameter memory (MiB)} \\
        \midrule

        Vision Encoder
        & 97.73
        & 90.53 \\

        Action Decoder
        & 1.25
        & 41.74 \\

        Scorer
        & 3.40
        & 7.15 \\

        Safety Head
        & 0.20
        & 2.02 \\

        Modulator
        & 194.62
        & 3999.51 \\

        \midrule

        \textbf{Total}
        & \textbf{297.20}
        & \textbf{4140.95} \\

        \bottomrule
    \end{tabular}
\end{table}

\subsection{Qualitative Analysis of Trajectory Selection}
\label{app:oracle_selection}

\paragraph{Selection failures of DrivoR.}
\begin{figure*}[!t]
  \centering
  \begin{minipage}[t]{0.495\textwidth}
    \centering
    \includegraphics[width=\linewidth]{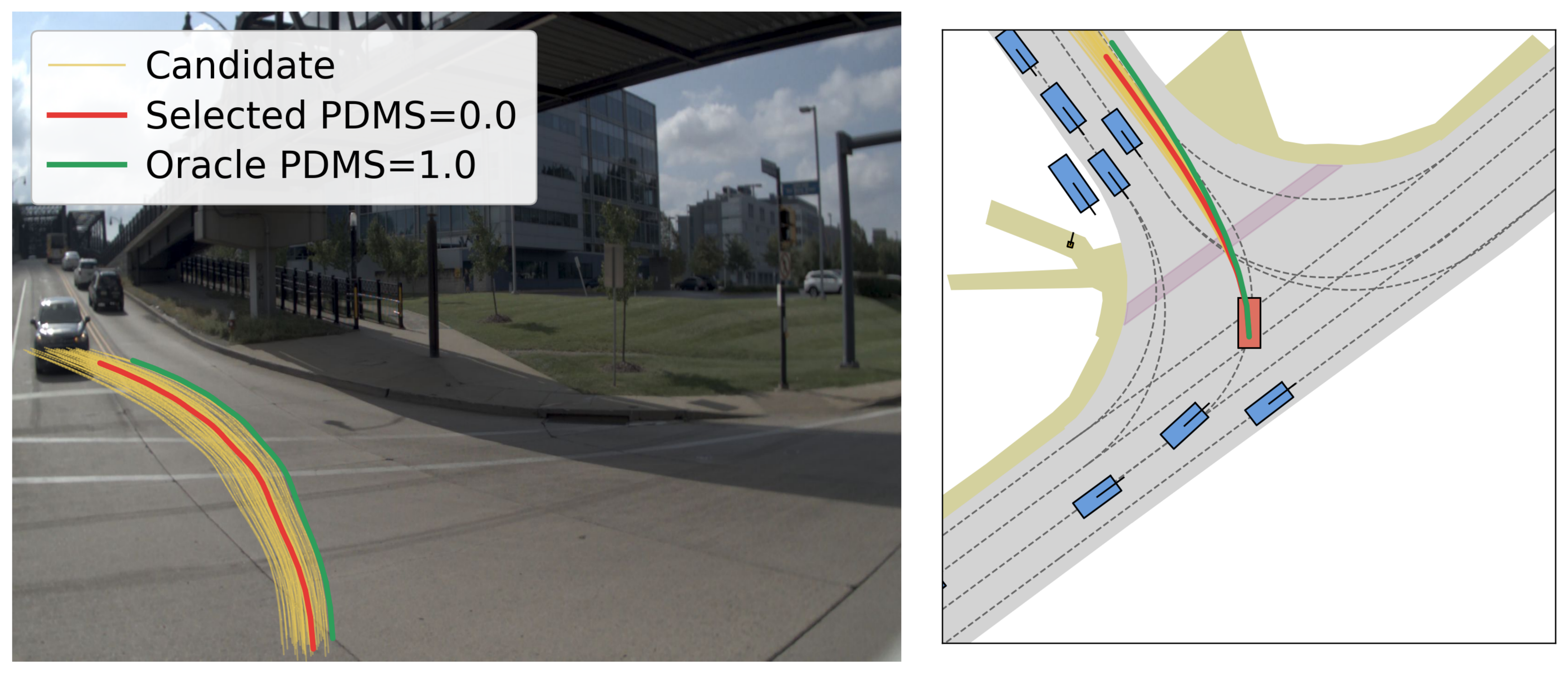}
  \end{minipage}\hfill
  \begin{minipage}[t]{0.495\textwidth}
    \centering
    \includegraphics[width=\linewidth]{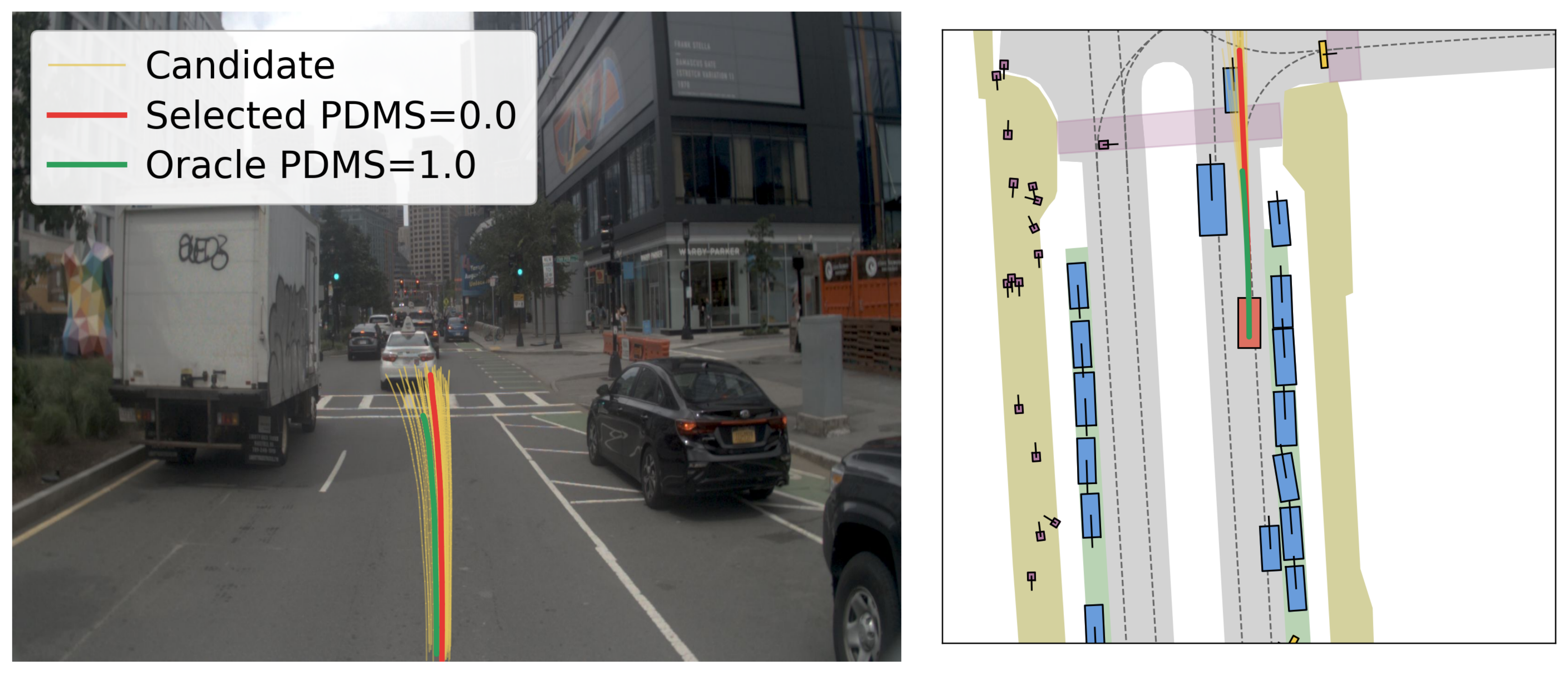}
  \end{minipage}

  \vspace{2mm}

  \begin{minipage}[t]{0.495\textwidth}
    \centering
    \includegraphics[width=\linewidth]{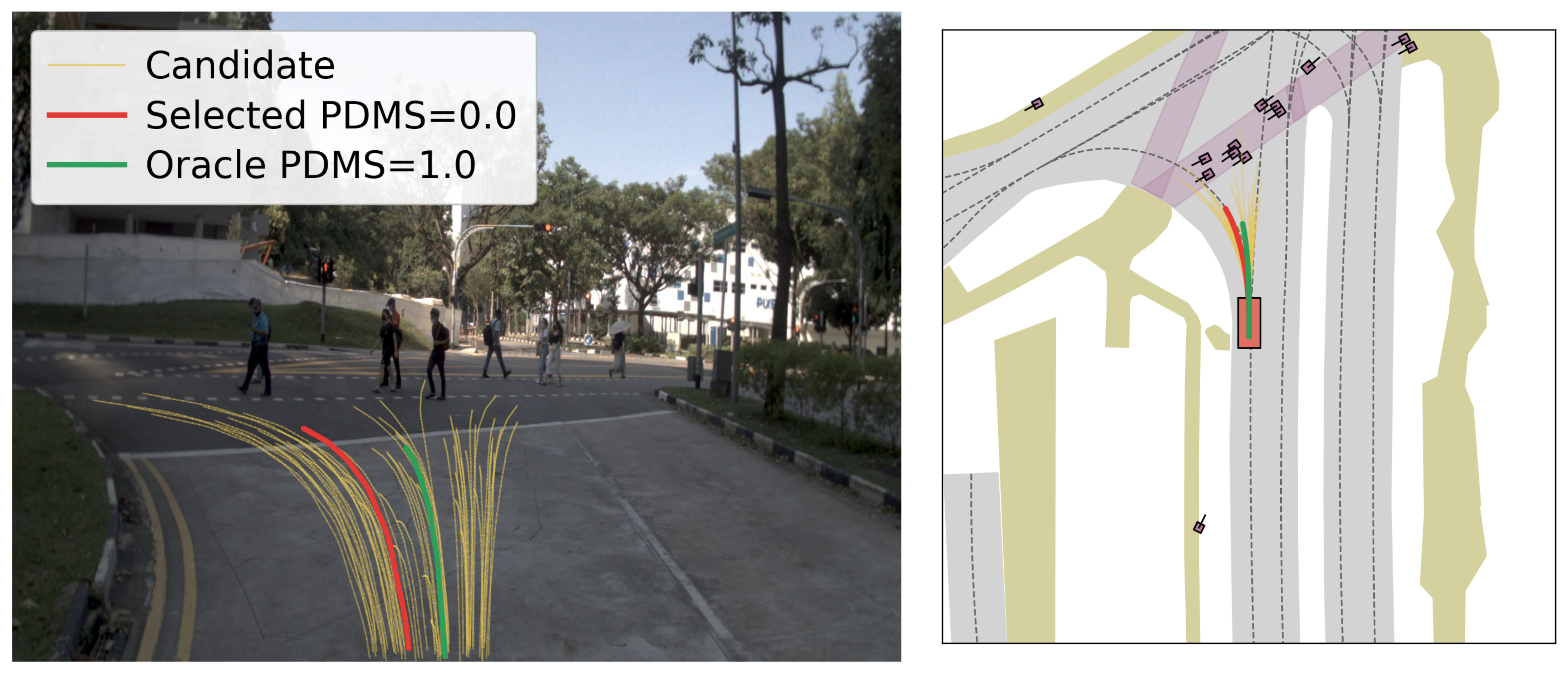}
  \end{minipage}\hfill
  \begin{minipage}[t]{0.495\textwidth}
    \centering
    \includegraphics[width=\linewidth]{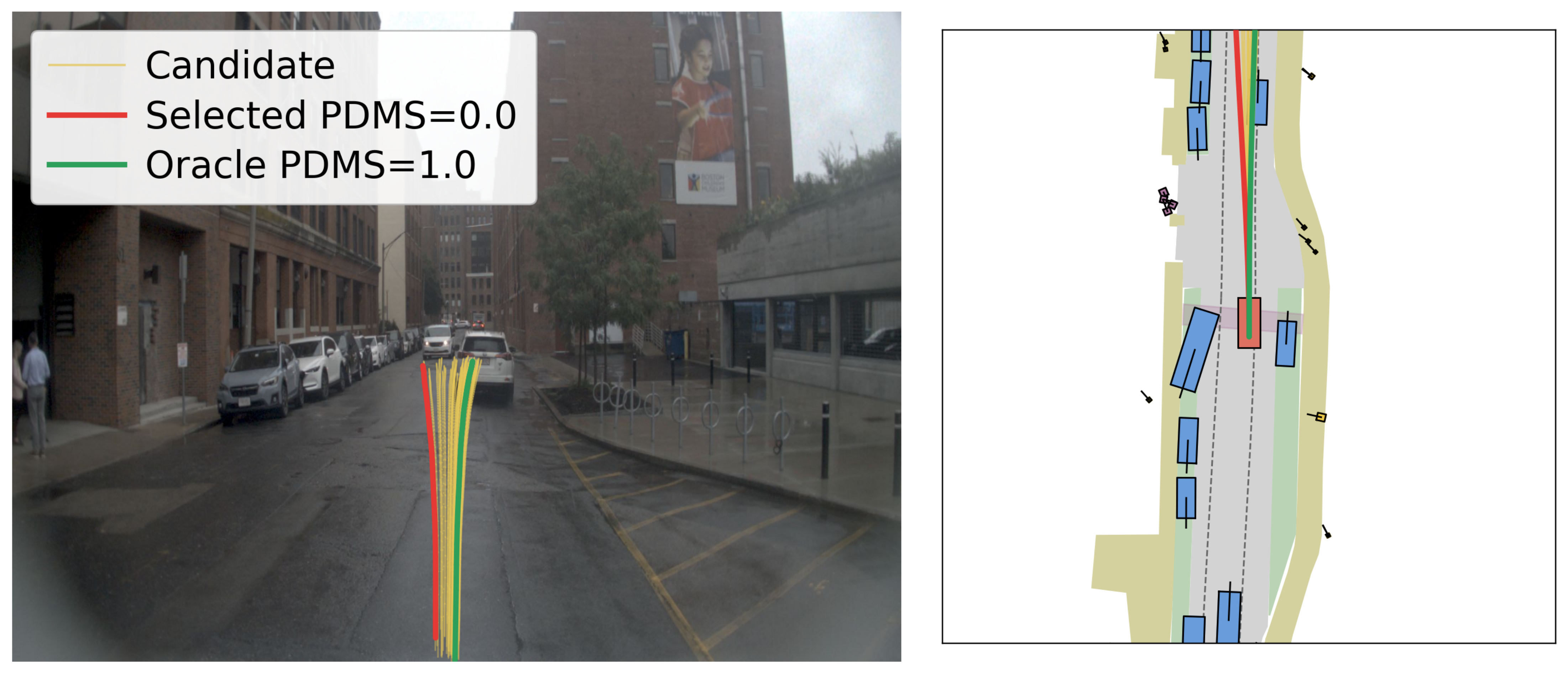}
  \end{minipage}

\caption{
\textbf{Representative catastrophic selection failures of DrivoR.}
Yellow curves denote the candidate trajectories, while red and green curves
denote the selected and oracle trajectories, respectively. Despite the
presence of a safe oracle trajectory, the evaluator selects an unsafe
candidate with zero PDMS.
}
  \label{fig:drivor_failure_cases}
\end{figure*}
Figure~\ref{fig:drivor_failure_cases} shows representative failures at
intersections and in interactions with vehicles and pedestrians. The
candidate overlays demonstrate that the generator already provides multiple
plausible alternatives, including a safe oracle trajectory. Nevertheless,
DrivoR selects an unsafe candidate with zero PDMS. Although the selected and
oracle trajectories differ only subtly in curvature or lateral offset, these
differences determine collision risk and road compliance, exposing the
evaluator's limited sensitivity to fine-grained safety risks and interaction
semantics.

\paragraph{Selection results of iDriveVLA.}
\begin{figure*}[!t]
  \centering
  \begin{minipage}[t]{0.495\textwidth}
    \centering
    \includegraphics[width=\linewidth]{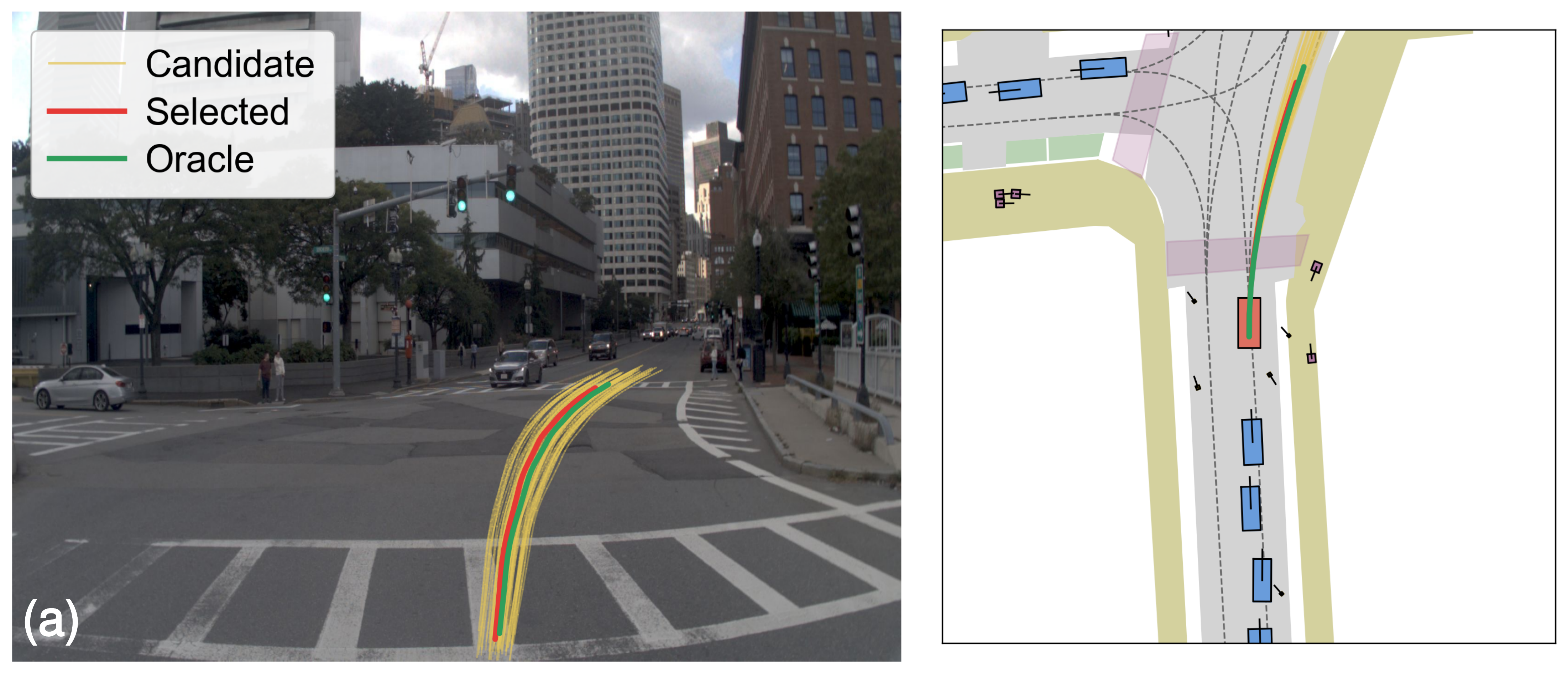}
  \end{minipage}\hfill
  \begin{minipage}[t]{0.495\textwidth}
    \centering
    \includegraphics[width=\linewidth]{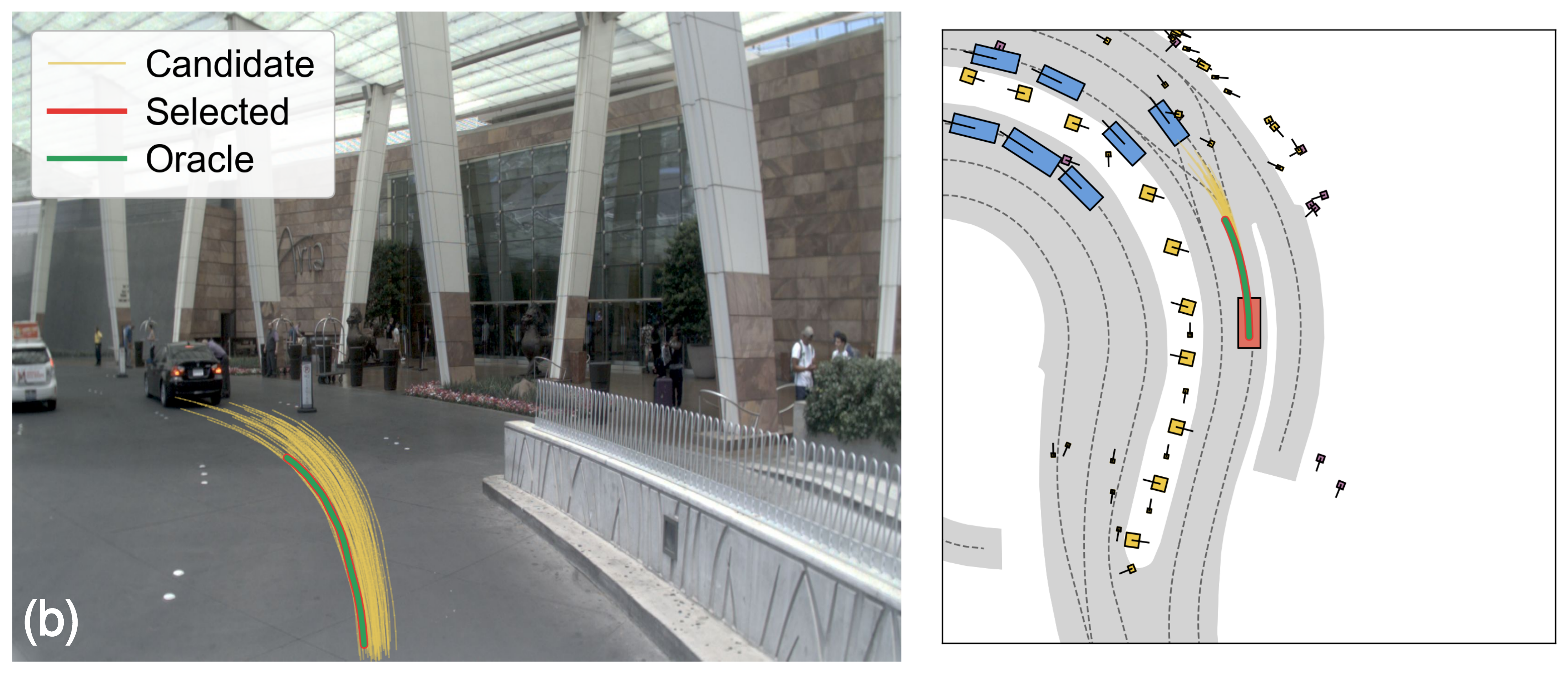}
  \end{minipage}
  \caption{\textbf{Qualitative comparison between the trajectory selected by
iDriveVLA and the oracle trajectory.} The selected trajectories closely follow or overlap with the oracle trajectories, demonstrating reliable identification of high-quality candidates.}
  \label{fig:qualitative_oracle}
\end{figure*}
We further compare the trajectories selected by iDriveVLA with the oracle
trajectories in Figure~\ref{fig:qualitative_oracle}. The oracle trajectory is
defined as the candidate achieving the highest ground-truth PDMS within the
generated candidate set. In
Figure\hyperref[fig:qualitative_oracle]{~\ref*{fig:qualitative_oracle}(a)},
the selected trajectory closely follows the oracle trajectory, with only a
minor geometric deviation, demonstrating a near-oracle selection. In
Figure\hyperref[fig:qualitative_oracle]{~\ref*{fig:qualitative_oracle}(b)},
iDriveVLA successfully identifies the oracle candidate, as indicated by the
overlap between the selected and oracle trajectories. These examples show
that the proposed evaluator can distinguish high-quality candidates from a
diverse trajectory set and select optimal or near-optimal driving behaviors.

\end{document}